\newtheorem{remark}{Remark}
\newtheorem{assumption}{Assumption}
\newtheorem{lemma}{Lemma}
\mathchardef\mhyphen="2D
\title{
Elucidating Rectified Flow with Deterministic Sampler: Polynomial Discretization Complexity for Multi and One-step Models
}
\author{
  Ruofeng Yang$^{1}$, Zhaoyu Zhu$^{1}$, Bo Jiang$^{1}$, Cheng Chen$^{2}$, Shuai Li$^{*1}$ \\
  $^{1}$Shanghai Jiao Tong University, \text{\{wanshuiyin, zzy12345, bjiang, shuaili8\}@sjtu.edu.cn} \\
  $^{2}$East China Normal University, \text{chchen@sei.ecnu.edu.cn}
}
\date{}
\begin{document}
\maketitle
\begingroup
\renewcommand\thefootnote{*}
\footnotetext{Corresponding author: Shuai Li (\texttt{shuaili8@sjtu.edu.cn})}
\endgroup

\begin{abstract}
Recently, rectified flow (RF)-based models have achieved state-of-the-art performance in many areas for both the multi-step and one-step generation. However, only a few theoretical works analyze the discretization complexity of RF-based models. Existing works either focus on flow-based models with stochastic samplers or establish complexity results that exhibit exponential dependence on problem parameters. In this work, under the realistic bounded support assumption, we prove the first polynomial discretization complexity for multi-step and one-step RF-based models with a deterministic sampler simultaneously. For the multi-step setting, inspired by the predictor-corrector framework of diffusion models, we introduce a Langevin process as a corrector and show that RF-based models can achieve better polynomial discretization complexity than diffusion models.  To achieve this result, we conduct a detailed analysis of the RF-based model and explain why it is better than previous popular models, such as variance preserving (VP) and variance exploding (VE)-based models. Based on the observation of multi-step RF-based models, we further provide the first polynomial discretization complexity result for one-step RF-based models, improving upon prior results for one-step diffusion-based models.  These findings mark the first step toward theoretically understanding the impressive empirical performance of RF-based models in both multi-step and one-step generation.   
\end{abstract}
\section{Introduction}\label{sec:intro}
To improve the sampling speed,  rectified flow (RF), a flow-based generative model, has been proposed and has demonstrated impressive performance with fast sampling speed across various domains, including 2D, 3D, and video generation  \citep{liu2022flowfirst,esser2024scalingsd3, liu2023instaflow,li2024flowdreamer3dv1,go2024splatflow3dv2,wang2024frierenvideo}. Rectified flow aims to learn a vector $v(X_t,t)$ to generate target samples $X_1\sim q^*\in \mathbb{R}^d$ from a pure Gaussian $X_0=Z\sim \mathcal{N}(0,I_d)$.
To construct a tractable vector field, RF-based models first perform a linear interpolation between the target distribution and pure Gaussian \citep{liu2022flowfirst}:
\begin{align*}
    X_t = (1-t)Z+tX_1\,,t\in [0,1]\,.
\end{align*}
Let $(q_t)_{t\in [0,1]}$ denote the marginal distributions of the interpolated process $(X_t)_{t\in [0,1]}$. Based on this interpolation, the optimal vector field is defined as 
\begin{align*}
    v^*(t, X)=\mathbb{E}\left[X_1-Z \mid X_t=X\right]\,.
\end{align*}
With this optimal vector field, the following process (with a deterministic sampler) has the same marginal distribution $q_t$ with the above linear interpolation
\begin{align*}
   \mathrm{d} X_t=v^*\left(t, X_t\right)\mathrm{~d} t, \quad X_0\sim \mathcal{N}(0,I_d)\,.
\end{align*}
The above discussion indicates that if the RF models access the ground truth optimal vector field $v^*$ and run the continuous process, it can generate the target distribution $q^*$. However, the optimal vector field contains the data information and usually can not be directly calculated. Hence, current works adopt the flow-matching objective function and use a neural network $\hat{v}$ to approximate the optimal vector field \citep{liu2022flowfirst}. After that, the RF model discretizes the above continuous sampling process into $N$ intervals and runs the discretized process to generate samples.

When saying the RF model enjoys a faster sampling speed, researchers mean that the sampling complexity $N$ is small. Though the RF-based models enjoy a fast sampling speed in the application, only a few theoretical works analyze the sample complexity of RF models. As shown in \cref{tab:iteration results}, the current sample complexity results are much larger than the diffusion model results when considering the bounded support data. More specifically, \citet{gao2024convergence} provide the first convergence guarantee for the RF-bade model with a deterministic sampler under different target data              
      (mixture of Gaussian, Log-concave, and bounded support) and the guarantee has an exponential dependence on the problem parameters when considering the harder bounded support assumption (See details in \cref{subsec: detailed cal previous work}). \citet{silveri2024theoreticalflowkl} consider a flow-based method with a stochastic sampler (diffusion flow matching, DFM) and achieve a polynomial sample complexity. However, this result is much larger than the current polynomial results of diffusion models. Furthermore, their result adopts a stochastic sampler, and the empirical works use a deterministic sampler. Hence, the following question naturally opens:

\textit{Is it possible for the RF-based method with a deterministic sampler to achieve a comparable sample complexity with the diffusion-based method under the bounded support assumption?}

We note that the bounded support assumption is more realistic than the mixture of Gaussians and the log-concave distributions since it is naturally satisfied by the image datasets and admits the blow-up phenomenon of the vector field at the end of the generation process \citep{kim2021softepsilonl}. More specifically, under the bounded support assumption, $v^*$ goes to $+\infty$ as $t$ goes to $1$. For the blow-up phenomenon, both empirical works \citep{kim2021softepsilonl,song2020sde} and theoretical works \citep{gao2024convergence, silveri2024theoreticalflowkl,chen2022sampling} adopt the early stopping technique and stop the generation process at $T-\delta$ ($T=1$ for the flow-based models), where $\delta$ is the early stopping parameter. In this work, we also adopt this technique.

\subsection{Our Contribution.}
As shown in \cref{tab:iteration results}, for the first time, under the bounded support assumption, we achieve the polynomial discretization complexity for the multi-step and one-step RF-based models simultaneously. Since these results are better than the previous diffusion-based models, this work makes the first step toward explaining the great performance of RF-based models from a theoretical perspective. 
\paragraph{Analysis for the Property of Rectified Flow.}
To achieve the above results, we first analyze the properties of the RF-based model and compare it with three widely used models: (1) VP diffusion-based models, (2) VE diffusion-based models, and (3) DFM flow-based models. We mainly focus on the influence of early stopping parameter $\delta$ and diffusion time $T$ ($T=1$ for flow-based models), which corresponds to $\epsilon_{W_2}$ and $\epsilon_{\mathrm{TV}}$, respectively. For the early stopping parameter $\delta$, a small $\delta$ is needed to guarantee $W_2(q_{T-\delta}, q^*)\leq \epsilon_{W_2}$. For the VP and DFM models, due to the roughness of the Brownian motion (Details in \cref{subsec: detailed cal previous work}), the $\delta$ have the order of $\epsilon_{W_2}^2$, which is the source of bad $\epsilon_{W_2}$ dependence (\cref{tab:iteration results}). On the contrary, since the linear interpolation without stochastic, the RF-based models enjoy a $\delta$ with a better $\epsilon_{W_2}$ order.

Very recently, \citet{yang2025polynomial} show that the VE-based models also enjoy a $\delta=\epsilon_{W_2}$. However, as discussed in \citet{yang2024leveraging}, different from the other three models (VP, RF and DFM-based models), the VE-based models require a large polynomial diffusion time $T=1/\text{Poly}(\epsilon_{\mathrm{TV}})$ and introduce additional $\epsilon_{\mathrm{TV}}$ dependence. In conclusion, compared with VP, VE and DFM models, the RF-based model enjoys a better $\delta$ and $T$  simultaneously, which leads to a great sample complexity.

Based on the above observation, we (1) propose the vector perturbation lemma (lemma \ref{vperturbation}), which has a better dependence $1/\epsilon_{W_2}^8$ compared with previous $1/\epsilon_{W_2}^{12}$ one of VP-based models \citep{chen2023probability} and (2) eliminate the reverse beginning error of VE-based models \citep{yang2024leveraging}. Then, we achieve the following complexity results for the multi-step and one-step RF-based models.

\paragraph{Polynomial Sample Complexity for Multi-step RF with Deterministic Sampler.}
For the RF-based models with a deterministic sampler, we adopt the predictor-corrector framework and fully use different Lipschitz constants for the vector and corresponding score to design the algorithm (\cref{algo:algo1}). After that, by using the proposed vector perturbation lemma, we achieve the polynomial sample complexity $1/(\epsilon_{\mathrm{TV}}\epsilon_{W_2}^6)$, which is the first one for the RF-based models with a deterministic sampler and is better than previous $1/(\epsilon_{\mathrm{TV}}\epsilon_{W_2}^8)$ for VP diffusion-based models and $1/(\epsilon_{\mathrm{TV}}^{2}\epsilon_{W_2}^{7})$ for VE-based models.    

\paragraph{Better Discretization Complexity for One-step RF.}
As a byproduct of the vector perturbation lemma, we also provide the first discretization complexity $L_f/\epsilon_{W_2}^{3+1/a}$ for the one-step RF-based models, where $L_f$ is the Lipschitz constant for the one-step mapping function $f_{\theta}$ and $a$ is the parameter of discretization scheme. As shown in \cref{tab:iteration results}, this result is better than the results of the VP and VE-based one-step models and makes the first step to explain the great performance of InstaFlow \citep{liu2023instaflow} (Detailed discussion in \cref{sec: oen step}).

\begin{table*}[t]
    \centering
    \begin{tabular}{c|c|c|c|c}
    \hline 
    &Sampler &Model  & Complexity & Reference\\
    \hline
    \multirow{6}{*} {\begin{tabular}[c]{@{}l@{}}Multi-step\\  Generation\end{tabular}  } & Stochastic & DFM  & $1/\epsilon_{\mathrm{KL}}^2\epsilon_{W_2}^{16}$& \cite{silveri2024theoreticalflowkl}\\
    \cline{2-5} &\multirow{5}{*}  {Deterministic} & VP & $1/\epsilon_{\mathrm{TV}}\epsilon_{W_2}^8$& \citet{chen2023probability}\\ 
    \cline{3-5} & &VE &$e^{1/\epsilon_{W_1}}\text{Poly}(1/\epsilon_{W_1})$ &\citet{yang2024leveraging}\\ 
    \cline{3-5} & &VE &{\begin{tabular}[c]{@{}l@{}}$1/\epsilon_{\mathrm{TV}}^{2}\epsilon_{W_2}^7$ (Uniform)\\  $1/\epsilon_{\mathrm{TV}}^{7}\epsilon_{W_2}^6$ (Exponential)\end{tabular} } &\citet{yang2025polynomial}\\ 
    \cline{3-5} & &RF &$e^{1/\epsilon_{W_2}}\text{Poly}(1/\epsilon_{W_2})$ &\citet{gao2024convergence}\\ 
    \cline{3-5} & &RF &$1/\epsilon_{\mathrm{TV}}\epsilon_{W_2}^6$ &\ Theorem \ref{thm:main_theorem_reverse_PFODE}\\
    \hline

    \multirow{5}{*} {\begin{tabular}[c]{@{}l@{}}One-step\\  Generation\end{tabular} } & \multirow{4}{*} {Deterministic} & VP  & $L_f/\epsilon_{W_2}^7$& \cite{lyu2023convergenceconsistencyv1}\\
    \cline{3-5} &  & VP & $L_f^3/\epsilon_{W_1}$& \citet{li2024towardsconsistencyv2}\\
    \cline{3-5} 
    & & VP & \begin{tabular}[c]{@{}l@{}} $L_f^{2}L_{S,X}^2/\epsilon_{W_1}^2$ \\$L_f^2/\epsilon_{W_1}^{10}$ (*) \end{tabular} & \citet{doutheoryconsistency}\\ 
    \cline{3-5}&  & VE & $\qquad  \qquad \quad L_f/\epsilon_{W_2}^{3+2/a},a\in [1,+\infty)$& \citet{yang2025improved}\\
    \cline{3-5} & &RF &$\qquad  \qquad \quad L_f/\epsilon_{W_2}^{3+1/a},a\in [1,+\infty)$ &\ Theorem \ref{thm:instaflow_results}
    \\\hline
    \end{tabular}
    
    \caption{The complexity results for multi-step and one-step generative models under the bounded support assumption. The subscript of $\epsilon$ indicates the distance metric. For example, $1/(\epsilon_{\mathrm{TV}}\epsilon_{W_2}^{6})$ means the output is $\widetilde{O}(\epsilon_{\mathrm{TV}})$ close to $q_{1-\delta}$, which is $\epsilon_{W_2}$-close to $q^*$. The $W_2$ is stronger than the $W_1$ guarantee. We consider the total discretization complexity (including the predictor and corrector complexity) when considering the predictor-corrector algorithm. (*) means that we transform the results of previous works into the results under our setting. We present more detail in \cref{subsec: detailed cal previous work}. }
    \label{tab:iteration results}
\end{table*}

\section{Related Work}\label{sec:related work}
Since this work involves the analysis of multi-step and one-step RF-based models with a deterministic sampler, we discuss the current complexity results for flow-based models, diffusion models with a deterministic sampler, and one-step generative models, respectively.
\paragraph{Sample Complexity for Flow-based Models.}
\citet{gao2024convergence} analyze the RF-based method with a deterministic under different target data distributions: the mixture of Gaussians, the log-concave data, and bounded support data. In the first two data distributions, they achieve the polynomial sample complexity. However, these two distributions do not allow the blow-up phenomenon of the vector field, which can not explain the empirical observation \citep{kim2021softepsilonl}. On the contrary, the bounded support assumption allows this phenomenon and is satisfied by the image dataset, which indicates it is a more general assumption. However, when considering this assumption, \citet{gao2024convergence} only achieve an exponential guarantee depending on $\exp{(1/\epsilon_{W_2})}$, which is much worse than the results of diffusion-based models (\cref{tab:iteration results}). When considering the RF-based method with a stochastic sampler, \citet{silveri2024theoreticalflowkl} achieve a polynomial sample complexity. However, their result is still larger than that of the diffusion-based method.

\paragraph{Sample Complexity for Diffusion models with a Deterministic Sampler.}
Since generative models with a deterministic sampler enjoy a fast sampling speed, the sample complexity for a deterministic sampler has been widely analyzed. However, these works either have an exponential dependence \citep{chen2023restoration,yang2024leveraging} or require a strong assumption on the data or the score function. More specifically, 
\citet{gao2024convergencediffusion} assume the target data is log-concave and achieve polynomial sample complexity.
Assuming the Jacobian matrix of the matrix is accurate enough, \citet{li2023towards} and \citet{li2024sharp} achieve nearly optimal sample complexity. However, the log-concave data is not usually satisfied by the real-world data, and the score-matching objective function for diffusion models can not guarantee that the Jacobian matrix is accurate enough. Without these strong assumptions, \citet{chen2023probability} introduce a Langevin process as the corrector for the deterministic sampler, use the predictor-corrector framework and achieve a polynomial sample complexity. Using a similar framework, \citet{yinuochen2024accelerating} and \citet{yang2025polynomial} achieve the polynomial sample complexity for the parallel sampling algorithm and VE-based models, respectively.
We note that the predictor-corrector algorithm is adopted by the empirical work  \citep{song2020sde,zhang2024collapse} for the deterministic sampler. In this work, we also adopt this framework to show the advantage of RF-based models compared with diffusion-based models (VP and VE-based models).

\paragraph{Discretization Complexity for One-step Generation Models.} Recently, some works have analyzed the discretization complexity of one-step diffusion-based models \citep{lyu2023convergenceconsistencyv1, li2024towardsconsistencyv2, doutheoryconsistency,yang2025improved}. However, as shown in \cref{tab:iteration results}, the discretization complexity of these works either achieve a weaker $W_1$ guarantee (compared with $W_2$ guarantee) \citep{li2024towardsconsistencyv2,doutheoryconsistency} or have a large $W_2$ dependence \citep{lyu2023convergenceconsistencyv1,yang2025improved}. In this work, we achieve a better result for one-step RF-based models compared with the VP and VE-based one-step generation models (\cref{thm:instaflow_results}).
\section{Preliminaries}\label{sec:intro of RF}
The goal of rectified flow is to generate samples from the target data distribution  $X_1\sim q^{*}\in \mathbb{R}^d$ starting from a standard Gaussian initialization  $X_0=Z \sim \mathcal{N}(0,I_d)$ though a vector field $v(t,X_t)$ and a deterministic sampler. To construct a tractable vector field, the RF-based model first performs a linear interpolation between the target distribution and the pure Gaussian, as proposed in  \citet{liu2022flowfirst}:
\begin{align}\label{eq:linear interpolation}
    X_t = (1-t)Z+tX_1\,,t\in [0,1]\,.
\end{align}
Let $(q_t)_{t\in [0,1]}$ be the marginal distribution of $(X_t)_{t\in [0,1]}$. Given a linear Gaussian stochastic interpolation, the optimal vector field has the following form 
\begin{align*}
    v^*(t,X)=\mathbb{E}\left[X_1-Z |X_t=X\right], \quad(t, X) \in[0,1] \times \mathbb{R}^d\,.
\end{align*}
With the above $v^*$, the following continuous ODE has the same marginal distribution as $X_t$:
\begin{align}\label{eq:the PFODE}
   \mathrm{d} X_t=v^*\left(t, X_t\right)\mathrm{~d} t, \quad X_0\sim \mathcal{N}(0,I_d)\,.
\end{align}
By running this continuous process starting from $t=0$ and ending at $t=1$, the generated distribution is $q^*$. However, since the optimal vector field $v^*$ usually corresponds to the unknown target distribution $q^*$, previous works use the flow matching technique to learn an approximate vector field $\hat{v}$:
\begin{align*}
    \hat{v} \in \underset{v\in \mathcal{F}}{\arg \min }\left\{\mathcal{L}(v):=\mathbb{E}_{t \sim U[0, 1-\delta]} \mathbb{E}_{X_1 \sim q^*, Z \sim \gamma_d}\left\|v\left(t, X_{t}\right)-(X_1-Z)\right\|_2^2\right\}\,,
\end{align*}
where $\mathcal{F}$ is a pre-defined function class.
As shown in \cref{def: ground truth vector}, $v^*(t,\cdot)$ blows up at the end of the generated process. Hence, we introduce an early stopping parameter $\delta$ and run the generated process in $t\in [0,1-\delta]$. Since the RF model can not run the continuous process, the model discretizes this process into $N$ intervals. Let $t_0=0<t_1<\cdots<t_N=1-\delta$ be the discretization points. With the approximated vector field $\hat{v}$, the RF model runs the following discrete process to generate samples for $n\in [1,N]$:
\begin{align}\label{def:predictor}
    \mathrm{d} \hat{X}_t=\hat{v}\left(t_{n-1}, \hat{X}_{t_{n-1}}\right)\mathrm{~d} t\,,\hat{X}_0\sim\mathcal{N}(0,I_d)\,, t \in\left[t_{n-1}, t_n\right]\,.
\end{align}

Since the above process predicts a cleaner sample from a noisy one, we named it the predictor, which is also used in the previous work \citep{song2020sde,chen2023probability,yinuochen2024accelerating}. In this work, we use the uniform with predictor stepsize $h_{\mathrm{pred}}$, which means $t_n-t_{n-1}=h_{\mathrm{pred}}$ for $n\in [1,N]$ and the predictor discretization complexity is $N=(1-\delta)/h_{\mathrm{pred}}$.

\subsection{The Underdamped Langevin Diffusion}\label{subsec:introduce of ULC}
From the empirical perspective, \citet{song2020sde} first propose the predictor-corrector framework and achieve great performance. \citet{zhang2024collapse} show that the modal collapse phenomenon of the deterministic predictor (sampler) can be reduced with an additional stochastic corrector. From the theoretical perspective, several theoretical works on deterministic samplers have introduced the underdamped Langevin process to achieve polynomial discretization complexity \citep{chen2023probability, yinuochen2024accelerating, yang2025improved}. This work also incorporates an underdamped Langevin corrector, enabling us to achieve polynomial complexity for rectified flow with a deterministic sampler.

Since the underdamped Langevin corrector works with a fixed $t$, we use $U_t$ as a shorthand for the potential $-\log q_t$. To distinguish from the predictor stage, we denote $m$ by the time of the corrector. Let $\rho>0$ be the friction parameter. The ULD is a stochastic process $\left(z_m, v_m\right)_{m \geq 0}$ over $\mathbb{R}^d \times \mathbb{R}^d$ given by 
\begin{align}\label{eq:ULD}
&\mathrm{d} z_m=v_m \mathrm{~d} m\notag\\
    &\mathrm{~d} v_m=-\left(\nabla U_t\left(z_m\right)+\rho v_m\right) \mathrm{d} m+\sqrt{2 \rho} \mathrm{d} B_m\,.
\end{align}
We define by $\boldsymbol{q}:=q \otimes \gamma^d$ with $\gamma^d=\mathcal{N}(0,I)$. Then, the stationary distribution of this process is $\boldsymbol{q}_t=q_t \otimes \gamma^d$. Since we can not run the continuous process, we also discretize the above corrector process with stepsize $h_{\mathrm{corr}}$. The results $\left(\hat{z}_m, \hat{v}_m\right)_{m \geq 0}$ is given by 
\begin{align}\label{eq:ULMC}
&\mathrm{d} \hat{z}_m=\hat{v}_m \mathrm{~d} m\notag\\
&\mathrm{~d} \hat{v}_m=\left(s_t(\hat{z}_{\lfloor \frac{m}{h_{\text{corr}}} \rfloor h_{\text{corr}}})-\rho \hat{v}_m\right) \mathrm{d} m+\sqrt{2 \rho} \mathrm{d} B_m\,,
\end{align}
where $s_t(X,t)$ is an approximation of ground truth score function $\nabla\log q_t(X)$. Instead of learning an additional score function, we obtain it by fully using the approximated vector field $\hat{v}$. As shown in \citet{gao2023gaussianflow}, the ground truth vector field has the following form 
\begin{align}\label{def: ground truth vector}
    v^*(t,X) = -\frac{1}{1-t}X+\frac{1}{1-t}\mathbb{E}[X_1|X_t=X]\,.
\end{align}
The ground truth score function is equal to 
\begin{align}\label{score velocity transform}
        \nabla \log q_t(X) = -\frac{1}{(1-t)^2}X+\frac{t}{(1-t)^2}\mathbb{E}[X_1|X_t=X]\,,
\end{align}
which indicates $\nabla \log q_t(X)=-\frac{1}{1-t}X+\frac{t}{(1-t)}v^*(t,X)$. Then, we obtain an approximated score $s_t$ by using the approximated vector field and a similar relationship
\begin{align}\label{eq:transfer between score and vector}
    s_t(X)=-\frac{1}{1-t}X+\frac{t}{(1-t)}\hat{v}(t,X)
\end{align}

\paragraph{Notation.} For $x\in \mathbb{R}^d$ and $A\in \mathbb{R}^{d\times d}$, we denote by $\|x\|$ and $\|A\|$ the $L_2$ norm for vector and matrix. We denote by $W_1$ and $W_2$ the Wasserstein distances of order one and two, respectively. Note that $W_1$ guarantee is weaker than $W_2$ guarantee since $W_1(p,q)\leq W_2(p,q)$. We also define the push-forward operator $\sharp$, which is associated with a measurable map $f: \mathcal{M}^{\prime} \rightarrow \mathcal{N}$ and used in \cref{sec: oen step}. For any measure $\mu$ over $\mathcal{M}^{\prime}$, we define the push-forward measure $f \sharp \mu$ over $\mathcal{N}$ by: $f \sharp \mu(A)=$ $\mu\left(f^{-1}(A)\right)$, for any $A$ be measurable set in $\mathcal{N}$. 

Since this work involves the discretization complexity of multi-step and one-step RF models, we provide a complete notation in \cref{sec:app notation}.
\section{Guarantee for RF with Deterministic Sampler and Langevin Corrector}\label{sec: multistep}
This section provides the first polynomial sample complexity for the RF-based models with a deterministic sampler. Section \ref{sec: algorithm} introduces the algorithm under the predictor-corrector framework, which fully uses the Lipschitz constant of the optimal vector field and corresponding score function. Then, we provide the sample complexity for the proposed algorithm in \cref{subsec:theory for multi-step}.
\subsection{Algorithm}\label{sec: algorithm}
As a beginning, we first introduce the assumption on the target data distribution $q^*$, which is useful in the design of the algorithm.
\begin{assumption}\label{ass: Manifold assumption}
$q^*$ is supported on a compact set $\mathcal{M}$, $0 \in \mathcal{M}$ and $R=\sup \{\|x-y\|: x, y \in \mathcal{M}\}\ge 1$. 
\end{assumption}
As discussed at the end of \cref{sec:intro}, this assumption is realistic since it allows the blow-up phenomenon and is satisfied by image datasets. Furthermore, it has been widely used in many theoretical works \citep{de2022convergence, chen2022sampling,yang2024leveraging,yang2025polynomial}.

After that, we define some Lipschitz constants for the vector field and the corresponding score function. By simple algebra (\cref{subsec:the property of optimal vector}), under \cref{ass: Manifold assumption}, we have the following inequality:
\begin{align*}
    \|\nabla v^*(t,X)\|\leq R^2/\delta^3\,, \text{and }  \|\partial_t v^*(t,X)\|\leq R^2\sqrt{R^2 \vee d}/\delta^4\,, \forall t\in [0,1-\delta], X\in \mathbb{R}^d\,. 
\end{align*}
Hence, the Lipschitz constant of $v^*(t,X)$ respect to $t$ is defined by $L_{v,t}=R^2\sqrt{R^2 \vee d}/\delta^4$ and the Lipschitz constant of $v^*(t,X)$ respect to $X$ is defined by $L_{v,X}=R^2/\delta^3$. The Lipschitz constant of the corresponding ground score function $\nabla \log q_t(X)$ with respect to $X$ is $L_{S,X}=R^2/\delta^4$ due to \cref{eq:transfer between score and vector}.

With the above Lipschitz constant, we describe our algorithm, which switches between the predictor and corrector stages.
Let $N'$ be the number of the predictor stage. For the predictor stage, the predictor runs the deterministic sampling process for $T_{\mathrm{pred}}=1/L_{v,X}$ with the predictor stepsize $h_{\mathrm{pred}}$. Then, the total sampling time $T$ is defined by $T=N'T_{\mathrm{pred}}=1-\delta$.

After the predictor stage, the corrector runs the ULD process for $T_{\mathrm{corr}}=1/\sqrt{L_{S,X}}$ with the corrector stepsize $h_{\mathrm{corr}}$. The formal algorithm is presented in \cref{algo:algo1}.

\begin{algorithm*}[!thb]
\caption{RF with deterministic sampler and Langevin process}\label{algo:algo1}
\begin{algorithmic}[1]
\STATE \textbf{Input:} Approximated vector field $\hat{v}$, predictor stepsize $h_{\mathrm{pred}}$, corrector stepsize $h_{\mathrm{corr}}$.
\STATE \textbf{Initialization:}
Draw $\hat{X}_0$ from $\mathcal{N}(0,I_d)$.
\FOR{$n=0,1,\ldots,N'-1$}
    \STATE \textbf{Predictor.} Starting from $\hat{X}_{k/L_{v,X}}$, run the Flow (\ref{def:predictor}) from time $n/L_{v,X}$ to $(n+1)/L_{v,X}$ with stepsize $h_{\text {pred}}$  to obtain $\hat{X}_{(n+1)/L_{v,X}}^{\prime}$.
    \STATE \textbf{Corrector.} Starting from $\hat{X}_{(n+1)/L_{v,X}}^{\prime}$, run ULMC (\ref{eq:ULMC}) for total time $1/\sqrt{L_{S,X}}$ with stepsize $h_{\text {corr}}$ and score $s_{(n+1)/L_{v,X}}$ (obtain by \cref{eq:transfer between score and vector} with $\hat{v}$) to obtain $\hat{X}_{(n+1)/L_{v,X}}$.
\ENDFOR

\RETURN $\hat{X}_{1-\delta}$
\end{algorithmic}
\end{algorithm*}

\begin{remark}\label{remark: discussion on bounded support data}
When considering DFM-based models, \citet{silveri2024theoreticalflowkl} require the eighth-order moment of the target data to be bounded, which is slightly weaker than our assumption. We note that \cref{ass: Manifold assumption} is used to guarantee the Lipschitz constant of the vector field is bounded, which is useful in designing the algorithm and obtaining the vector perturbation lemma. It is an interesting direction to relax our assumption on target data, and we leave it as future work. 
\end{remark}
\subsection{Theoretical Results}\label{subsec:theory for multi-step}
Before showing the sample complexity of \cref{algo:algo1}, we first introduce some standard assumptions on the approximated vector field.
We first assume the learned vector field is accurate enough, which is widely used in the diffusion-based models \citep{de2022convergence,chen2022sampling,benton2023linear,yang2024leveraging} and the flow-based models \citep{silveri2024theoreticalflowkl}.
\begin{assumption}\label{ass:score approxiamation assumption}
    For the any time $t$, we have $\mathbb{E}_{X_t\sim q_t}[\|v^*(t,X_t)-\hat{v}(t,X_t)\|^2] \leq \varepsilon_{sc}^2$. 
\end{assumption}
We also assume the approximated vector field is $L_{v,X}$-Lipschitz.
\begin{assumption}\label{hatvlip}
$\hat{v}(t,X_t)$ is $L_{v,X}$-Lipschitz in every point used in our algorithm.
\end{assumption}
This assumption is standard for the predictor-corrector algorithm \citep{chen2023probability,yinuochen2024accelerating, yang2025polynomial}. Furthermore, $L_{v,X}=R^2/\delta^3$ is a large constant when the early stopping parameter $\delta$ is small. Hence, this assumption can be satisfied by using the truncation technique.

Let $p_{1-\delta}^{\mathrm{ULMC}}$ be the output of \cref{algo:algo1}. Then, with the above assumptions, we achieve the first polynomial sample complexity for the RF-based models with a deterministic sampler.

\begin{restatable}{theorem}{thmmaintheoremreversePFODE}\label{thm:main_theorem_reverse_PFODE}
Assuming \cref{ass: Manifold assumption}, \ref{ass:score approxiamation assumption}, \ref{hatvlip} holds. Then, we have the following bound
\begin{align*}
     \operatorname{TV}(p_{1-\delta}^{\operatorname{ULMC}},q_{1-\delta})\lesssim \frac{R^3(R\vee \sqrt{d})}{\delta^6}h_{\mathrm{pred}}+\frac{R^3}{\delta^5}h_{\mathrm{corr}}+\frac{\varepsilon_{sc}}{\delta^2}\,.
\end{align*}
Furthermore, by choosing $ \delta=\epsilon_{W_2}/(\sqrt{d}+ R), h_{\mathrm{corr}}\leq \delta^5\epsilon_{\operatorname{TV}}/R^3, h_{\mathrm{pred}}\leq \delta^{6}\epsilon_{\operatorname{TV}}/(R^3(R\vee\sqrt{d}))$, and assume $\varepsilon_{sc}\leq\delta^2\epsilon_{W_2}$, the output is $\epsilon_{\operatorname{TV}}$-close to $q_{1-\delta}$, which is $\epsilon_{W_2}$-close to $q^*$, with total iteration complexity
\begin{align*}
     \max\left\{\frac{1}{h_{\mathrm{pred}}}, \frac{N'}{\sqrt{L_{S,X}}h_{\mathrm{corr}}}\right\}\leq \Tilde{O}\left(\frac{R^3(R\vee\sqrt{d})}{\epsilon_{\operatorname{TV}}\epsilon_{W_2}^{6}}\right)\,.
\end{align*}
\end{restatable}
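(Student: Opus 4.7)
The plan is to decompose the error over the $N'$ predictor-corrector rounds and bound the TV error introduced per round, then tune $\delta, h_{\mathrm{pred}}, h_{\mathrm{corr}}$ to match the desired guarantees. Concretely, I would set up a ``reference'' chain that tracks $q_t$ exactly at the switch times $t_n = n/L_{v,X}$, and compare it to the algorithmic chain. By the data-processing inequality, the TV error after all rounds is at most the sum of the per-round TV deviations introduced by (a) the discretized deterministic predictor and (b) the discretized ULMC corrector, each initialized at the end of the previous step. Because TV is not preserved by ODE flows in a friendly way, the key design feature is that the corrector is interposed at the end of every predictor window, so the stochastic regularization can ``erase'' the distortion caused by the deterministic predictor segment of length $T_{\mathrm{pred}} = 1/L_{v,X}$.

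For the predictor step, the goal is to show that running the discretized flow with $\hat{v}$ for a window of length $1/L_{v,X}$, starting from $q_{t_{n-1}}$, yields a law close in TV to the law obtained by running the exact PF-ODE with $v^\ast$ for the same window. I would invoke the vector perturbation lemma (Lemma~\ref{vperturbation}) to convert the $L^2$ vector-field error $\varepsilon_{sc}$, the time-Lipschitz error $L_{v,t} h_{\mathrm{pred}}$ arising from holding $\hat v(t_{n-1},\cdot)$ fixed on an interval of size $h_{\mathrm{pred}}$, and the spatial-Lipschitz error $L_{v,X} h_{\mathrm{pred}} \cdot \|\hat v\|$ into a TV bound per window. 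The length $T_{\mathrm{pred}} = 1/L_{v,X}$ is exactly calibrated so that Grönwall-type blow-up factors $\exp(L_{v,X} T_{\mathrm{pred}})$ remain $O(1)$, yielding a per-window TV bound that is linear in $h_{\mathrm{pred}}$ with prefactor involving $L_{v,t}$ and $\varepsilon_{sc}$. Summing over the $N' = (1-\delta)L_{v,X}$ windows produces the $h_{\mathrm{pred}}\cdot L_{v,t}\cdot (1-\delta)\sim R^3(R\vee\sqrt d)/\delta^6 \cdot h_{\mathrm{pred}}$ term and the $\varepsilon_{sc}/\delta^2$ term, using $N' \cdot \varepsilon_{sc}/L_{v,X}$-style accounting.

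For the corrector step at time $t_n$, I would use the standard ULMC analysis on $\boldsymbol{q}_{t_n}$. The score $s_{t_n}$ obtained via \eqref{eq:transfer between score and vector} inherits an $L^2$ error comparable to $\varepsilon_{sc}/(1-t_n) \lesssim \varepsilon_{sc}/\delta$, and its Lipschitz constant is $L_{S,X}=R^2/\delta^4$. Running ULMC for $T_{\mathrm{corr}} = 1/\sqrt{L_{S,X}}$ with step $h_{\mathrm{corr}}$ contracts toward $\boldsymbol q_{t_n}$ and introduces a discretization-plus-score error; the standard ULMC bound then gives a per-step TV cost $\lesssim \sqrt{L_{S,X}}\cdot h_{\mathrm{corr}} + \varepsilon_{sc}/\delta$, and summing over $N'$ rounds yields the $R^3/\delta^5\cdot h_{\mathrm{corr}}$ term. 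The main obstacle is getting both the predictor and corrector bounds to compose cleanly, since the predictor output is not literally a measure on the extended $(z,v)$ space required by ULMC; I would handle this by conditioning on the position marginal and lifting with an independent Gaussian velocity, exploiting that $\boldsymbol q_t$ is a product.

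Finally, combining the three error contributions reproduces the stated TV bound. To force each term below $\epsilon_{\mathrm{TV}}$, I set $\delta = \epsilon_{W_2}/(\sqrt d + R)$ so that the early-stopping bound from \cref{ass: Manifold assumption} gives $W_2(q_{1-\delta},q^\ast)\le\epsilon_{W_2}$, then $h_{\mathrm{pred}}\le\delta^6\epsilon_{\mathrm{TV}}/(R^3(R\vee\sqrt d))$ and $h_{\mathrm{corr}}\le \delta^5\epsilon_{\mathrm{TV}}/R^3$, with the score assumption $\varepsilon_{sc}\le\delta^2\epsilon_{W_2}$. The total iteration count is the maximum of the predictor count $1/h_{\mathrm{pred}}$ and the corrector count $N'/(\sqrt{L_{S,X}}\,h_{\mathrm{corr}})$; the predictor count dominates and evaluates, after substituting $\delta$, to $\widetilde O(R^3(R\vee\sqrt d)/(\epsilon_{\mathrm{TV}}\epsilon_{W_2}^{6}))$, matching the claim. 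The technical crux is really Lemma~\ref{vperturbation}: its improved $1/\epsilon_{W_2}^{8}$-style dependence (over the $1/\epsilon_{W_2}^{12}$ of VP analyses) is what shaves the final complexity from $1/\epsilon_{W_2}^{8}$ down to $1/\epsilon_{W_2}^{6}$.
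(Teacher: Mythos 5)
Your plan matches the paper's structure at a high level: predictor--corrector rounds of length $T_{\mathrm{pred}}=1/L_{v,X}$ to tame Gr\"onwall blow-up, Lemma~\ref{vperturbation} to control the local discretization error, a Langevin corrector after each window, data-processing inequality to compose rounds, and the same final parameter choices. But there is a concrete error in the accounting that comes from blurring the boundary between $W_2$ and TV. The discretized deterministic predictor does \emph{not} produce a per-window TV bound, only a per-window $W_2$ bound of order $h_{\mathrm{pred}}L_{v,t}/L_{v,X}+\varepsilon_{sc}/L_{v,X}$ (Lemmas~\ref{prediction step}--\ref{lem:predict-n-step}). The corrector then converts this $W_2$ error into TV at rate $\sqrt{L_{S,X}}$: running the ULD to approximate stationarity for time $T_{\mathrm{corr}}\asymp1/\sqrt{L_{S,X}}$ gives $\operatorname{TV}(pP_{\mathrm{ULD}},q)\lesssim W_2(p,q)/(L_{S,X}^{1/4}T_{\mathrm{corr}}^{3/2})\asymp\sqrt{L_{S,X}}\,W_2(p,q)$ (the paper's Lemma~\ref{lemma10}, then Lemma~\ref{lem:finalcorrector}). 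Your formula drops this factor. As written, $h_{\mathrm{pred}}\cdot L_{v,t}\cdot(1-\delta)$ evaluates to $R^2(R\vee\sqrt d)/\delta^4\cdot h_{\mathrm{pred}}$, not to the claimed $R^3(R\vee\sqrt d)/\delta^6\cdot h_{\mathrm{pred}}$; the missing piece is exactly the factor $\sqrt{L_{S,X}}=R/\delta^2$. The same omission affects your score-error term: ``$N'\cdot\varepsilon_{sc}/L_{v,X}$-style accounting'' gives $\varepsilon_{sc}$, not $\varepsilon_{sc}/\delta^2$, again because the extra $\sqrt{L_{S,X}}$ should be multiplied in before summing. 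So the structural gap is: you describe the corrector as merely ``erasing'' the predictor's distortion, but you do not quantify the $W_2$-to-TV conversion rate at which this happens, and that rate $\sqrt{L_{S,X}}$ is not a constant --- it contributes two more powers of $1/\delta$ to the final bound. Once you make that step explicit, your decomposition does reproduce the theorem and matches the paper's proof (triangle + data-processing as in Lemma~\ref{lem:finallemma}, iterated $N'\le L_{v,X}$ times).
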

This result significantly improves the exponential dependence results for the RF-based models \citep{gao2024convergence} and achieves the first polynomial sample complexity. Furthermore, as shown in \cref{tab:iteration results}, the results is much better than $1/\epsilon_{\mathrm{KL}}^2\epsilon_{W_2}^{16}$ result of the DFM models, the $1/\epsilon_{\mathrm{TV}}\epsilon_{W_2}^{8}$ result for the VP-based diffusion models and the $1/\epsilon_{\mathrm{TV}}^{2}\epsilon_{W_2}^{7}$ result for the VE-based diffusion models. The great performance of RF-based models is due to the better dependence on the early stopping parameter $\delta$ and diffusion time $T$, which are discussed in the following part.
\subsection{Technique Novelty}\label{subsec:technique novelty}
This section first briefly introduces the proof sketch of the sample complexity results for the algorithm. Then, we propose the vector perturbation lemma, which is the core of the great performance of
\cref{thm:main_theorem_reverse_PFODE} and \ref{thm:instaflow_results}. Finally, we discuss the advantages of the RF-based models compared with previous popular models (VP, VE and DFM-based models).
\paragraph{Proof Sketch.} The proof sketch of \cref{thm:main_theorem_reverse_PFODE} is similar with \citet{chen2023probability}. The predictor stage of the algorithm runs the discretized sampler (\cref{def:predictor}) to push the time from $n/L_{v,X}$ to $(n+1)/L_{v,X}$, which is bounded by $\|X_t-\hat{X}_t\|^2$ in $W_2$ distance. However, as shown in lemma 4 of \citet{gao2024convergence}, if trivially integrating from $0$ to $1-\delta$, there will be an exponential term $e^{L_{v,X}}$ , where $L_{v,X}=R^2/\delta^3$ is a large constant.

To avoid exponential dependence, the algorithm introduces the corrector stage to inject suitable noise, which decouples each predictor stage and allows the use of the data processing inequality. With this corrector, the exponential $L_{v,X}$ dependence can be replaced with the polynomial one.

\paragraph{Vector Perturbation Lemma.}
Since our technique novelty mainly lies in the predictor stage, we first show how to bound one predictor error $\|X_t-\hat{X}_t\|^2$ starting from the same distribution. Using the definition of $X_t$ (\cref{eq:the PFODE}) and $\hat{X}_t$ (\cref{def:predictor}), the following equation holds
\begin{align*}
\partial_t \|X_t - \hat{X}_t\|^2 &=2\Big\langle X_t-\hat{X}_t, \frac{\mathrm{~d} X}{\mathrm{~d} t}-\frac{\mathrm{d} \hat{X}}{\mathrm{~d} t}\Big\rangle\\&=  \langle X_t - \hat{X}_t, v^*(t,X_t)-\hat{v}(t,\hat{X}_t)\rangle \\
&\leq \frac{1}{h} \|X_t - \hat{X}_t\|^2 + h \|v^*(t,X_t)-\hat{v}(t,\hat{X}_t)\|^2\,.
\end{align*}
By Gr\"{o}nwall's inequality, we know that
\begin{align*}
    \mathbb{E}[\|X_{t+h} - \hat{X}_{t+h}\|^2] \leq \exp( \frac{1}{h}\cdot h) \int_{t}^{t+h} h \mathbb{E}[\|v^*(t,X_t)-\hat{v}(t,\hat{X}_t)\|^2]  \mathrm{~d}t\,,
\end{align*}
which is corresponds to $\underset{X_t \sim q_t}{\mathbb{E}}\|\partial_t v^*\left(t, x_t\right)\|^2$. Therefore, we conduct a refined analysis of this term and propose the following vector perturbation lemma.

\begin{restatable}{lemma}{vperturbation}\label{vperturbation}
    Assume \cref{ass: Manifold assumption} holds. Then, we have the following bound 
    \begin{align*}
        \underset{X_t\sim q_t}{\mathbb{E}}\|\partial_tv^*(t,X_t)\|^2\lesssim R^4(R^2 \vee d)/(1-t)^8\leq R^4(R^2 \vee d)/\delta^8=L_{v,t}^2\,.
    \end{align*}
\end{restatable}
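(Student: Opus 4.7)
The plan is to exploit the explicit representation $v^*(t, X) = (m(t, X) - X)/(1-t)$ from \cref{def: ground truth vector}, where $m(t, X) := \mathbb{E}[X_1 \mid X_t = X]$. Differentiating in $t$ at fixed $X$ gives
\begin{align*}
\partial_t v^*(t, X) = \frac{\partial_t m(t, X)}{1-t} + \frac{v^*(t, X)}{1-t}.
\end{align*}
The second summand is easy: by conditional Jensen, $\mathbb{E}_{X_t \sim q_t}\|v^*(t, X_t)\|^2 \le \mathbb{E}\|X_1 - Z\|^2 \le 2R^2 + 2d$, so it contributes at most $O((R^2 \vee d)/(1-t)^2)$, well below the target. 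The entire proof thus reduces to bounding $\mathbb{E}_{X_t \sim q_t}\|\partial_t m(t, X_t)\|^2$.

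To do so I will derive a closed form for $\partial_t m$. Writing the Gaussian likelihood as $\phi_t(X, x_1) = (2\pi(1-t)^2)^{-d/2}\exp(-\|X - tx_1\|^2/(2(1-t)^2))$, a direct computation gives
\begin{align*}
\partial_t \log \phi_t(X, x_1) = \frac{d + \langle Z, x_1\rangle - \|Z\|^2}{1-t}, \qquad Z := \frac{X - tx_1}{1-t}.
\end{align*}
Differentiating $p(x_1 \mid X_t = X) \propto q^*(x_1)\phi_t(X, x_1)$ in $t$ and using $\partial_t \log q_t(X) = \mathbb{E}[\partial_t \log \phi_t \mid X_t]$ yields the identity $\partial_t m = \mathbb{E}[(X_1 - m)\, \partial_t \log \phi_t(X, X_1) \mid X_t]$. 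Expanding $\langle Z, X_1\rangle - \|Z\|^2$ as a quadratic polynomial in $X_1$, the $X_1$-independent pieces vanish by centering, and the conditional-moment identities $\mathbb{E}[(X_1 - m) X_1^\top \mid X_t] = \Sigma_t(X)$ and $\mathbb{E}[(X_1 - m)\|X_1\|^2 \mid X_t] = \mu_3(t, X) + 2\Sigma_t(X)\, m(t, X)$ collapse everything to
\begin{align*}
\partial_t m(t, X) = \frac{\Sigma_t(X)\bigl[(1+t)X - 2tm(t, X)\bigr] - t\, \mu_3(t, X)}{(1-t)^3},
\end{align*}
where $\Sigma_t(X) := \mathrm{Cov}(X_1 \mid X_t = X)$ and $\mu_3(t, X) := \mathbb{E}[(X_1 - m)\|X_1 - m\|^2 \mid X_t = X]$.

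With the closed form in hand, \cref{ass: Manifold assumption} gives $\|X_1 - m\| \le 2R$ almost surely, hence $\|\Sigma_t(X)\|_{\mathrm{op}} \le 4R^2$ and $\|\mu_3(t, X)\| \le 8R^3$. Using the algebraic rewrite $(1+t)X - 2tm = (1-t)\bar m - t(1-t) v^*$ with $\bar m(t, X) := \mathbb{E}[Z \mid X_t = X] = (X - tm)/(1-t)$, one gets the pointwise estimate $\|\partial_t m\|^2 \lesssim R^4(\|\bar m\|^2 + \|v^*\|^2)/(1-t)^4 + R^6/(1-t)^6$. Taking $\mathbb{E}_{X_t}$ and using conditional Jensen once more ($\mathbb{E}\|\bar m(t, X_t)\|^2 \le \mathbb{E}\|Z\|^2 = d$ and $\mathbb{E}\|v^*(t, X_t)\|^2 \lesssim R^2 + d$) produces $\mathbb{E}\|\partial_t m\|^2 \lesssim R^4(R^2 \vee d)/(1-t)^6$, and combining with the first display gives the target $\mathbb{E}\|\partial_t v^*(t, X_t)\|^2 \lesssim R^4(R^2 \vee d)/(1-t)^8 \le L_{v,t}^2$.

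The main obstacle is the closed-form derivation of $\partial_t m$: because the Gaussian likelihood $\phi_t$ depends on $t$ both through its mean $tX_1$ and its variance $(1-t)^2$, $\partial_t \log \phi_t$ produces a mixture of constant, linear, and quadratic terms in $X_1$ that must be organized so that only $\Sigma_t$ and the third central moment $\mu_3$ survive after centering. Once this identity is secured, the remaining estimates are a routine application of the bounded-support assumption and Jensen's inequality.
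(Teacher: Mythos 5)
Your derivation of $\partial_t m(t,X)$ via differentiating the posterior likelihood is correct and effectively re-derives what the paper cites as Gao et al.'s Lemma~3.4 (Lemma~\ref{lem11} here), so that part is fine. However, there is a genuine gap in what quantity you are bounding. The object $\partial_t v^*(t,X_t)$ in the lemma statement is the \emph{total} time derivative along the PF-ODE trajectory $\dot X_t = v^*(t,X_t)$, not the partial derivative at a frozen spatial argument. This is how the paper uses it: the first line of the paper's own proof is the chain-rule identity
\begin{align*}
\partial_t v^*(t,X_t) \;=\; \partial_t v^*(t,X)\big|_{X=X_t} \;+\; \nabla_X v^*(t,X_t)\, v^*(t,X_t)\,,
\end{align*}
and it must be the total derivative for the downstream application in Lemma~\ref{prediction step}, where $v^*(t,X_t) - v^*(t_0,X_{t_0}) = \int_{t_0}^t \partial_s v^*(s,X_s)\,\mathrm{d}s$ along the flow. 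Your proposal only bounds the partial-time piece $\partial_t v^*(t,X)\big|_{X=X_t}$; the transport term $\nabla_X v^*(t,X_t)\,v^*(t,X_t)$ is entirely absent. The omission does not change the final order --- using $\|\nabla_X v^*(t,X)\| \lesssim R^2/(1-t)^3$ (from $\nabla_X v^* = \tfrac{t}{(1-t)^3}M_2^c - \tfrac{1}{1-t}I_d$ and $\|M_2^c\|\lesssim R^2$) together with your own Jensen bound $\mathbb{E}\|v^*(t,X_t)\|^2 \lesssim R^2 \vee d$ gives $\mathbb{E}\|\nabla_X v^*\cdot v^*\|^2 \lesssim R^4(R^2\vee d)/(1-t)^6$, which is dominated by your $\partial_t m$ term --- but as written the proof does not establish the claimed bound because it bounds a different quantity. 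Adding this one term completes the argument with the same conclusion.
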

This lemma is one of our core contributions, which is used to control the predictor error in the multi-step RF method (\cref{thm:main_theorem_reverse_PFODE}) and one-step discretization error in the one-step RF method (\cref{thm:instaflow_results}). In the following paragraph, we discuss three widely analyzed models: (1) VP-based diffusion models, (2) VE-based diffusion models, and (3) DFM models, and show why the RF-based model with a deterministic sampler can achieve a better result. We mainly focus on two points: the early stopping parameter $\delta$ and diffusion time $T$.
\paragraph{Better Dependence on $\delta$ Compared with VP and DFM Models.}
To avoid the blow-up phenomenon at the end of the generation process \citep{kim2021softepsilonl}, current theoretical works adopt the early stopping technique \citep{gao2024convergence, chen2023probability, yang2024leveraging, yang2025polynomial}. More specifically, we need to determine an early stopping parameter $\delta$ to guarantee $W_2(q_{T-\delta},q^*)\leq \epsilon_{W_2}$ (recall that $q_T=q^*$) \footnote{For the diffusion-based models, $T$ is the diffusion time, and for the RF-based models, $T=1$.}. For the VP-based diffusion models \citep{chen2023probability} and diffusion flow matching (DFM), due to
the roughness of the Brownian motion, $\delta$ has order $\delta=\epsilon_{W_2}^2$ (Here, we ignore the dependence of $R,d$ for simplicity). On the contrary, due to the linear interpolation without stochastic, $\delta=\epsilon_{W_2}$ for our RF-based models.  For the VP-based models, the bad $\delta$ directly leads to a worse score perturbation lemma compared with \cref{vperturbation} and then a worse sample complexity in \cref{tab:iteration results}. More specifically, Lemma 1 of \citet{chen2023probability} shows that
\begin{align*}
\mathbb{E}\left[\left\|\partial_t \nabla \log q_t\left(X_t\right)\right\|^2\right] \leq L_{S,X}^2 d\left(L_{S,X}+\frac{1}{\delta}\right)\,,
\end{align*}
where $L_{S,X}=R^2/\delta^2$ for the VP-based diffusion models. It is clear that when considering $\delta$, $1/\delta^6$ dependence of \citet{chen2023probability} is better than the $1/\delta^8$ dependence of lemma \ref{vperturbation}. However, with $\delta=\varepsilon_{W_2}^2 /(\sqrt{d}(R \vee \sqrt{d}))$ \citep{chen2022sampling}, the right hand of the above inequality has the order of $1/\epsilon_{W_2}^{12}$, which is much worse than the $1/\epsilon_{W_2}^{8}$ order of \cref{vperturbation}.  

For the VE-based models, \citet{yang2025polynomial} show that $\delta$ has the order of $\epsilon_{W_2}$ with a specific noise schedule. However, as shown in the following paragraph, the VE-based diffusion models require a large diffusion time, introducing additional $\epsilon_{\mathrm{TV}}$ dependence. 
\paragraph{Better Dependence on Diffusion Time Compared with VE-based Models.}
Instead of directly linear interpolation between the target data $q^*$ and pure Gaussian, the diffusion-based models adopt the diffusion process to gradually convert data to Gaussian noise during the diffusion time $T$. Hence, for the diffusion-based models, there is an additional reverse beginning error term, which measures the distance between the noisy target data and pure Gaussian. As shown in \citet{yang2024leveraging}, the reverse beginning error for the VP-based models has the order of $\exp{(-T)}$, and we require $T=\log(1/\epsilon_{\mathrm{TV}})$ to guarantee this term is smaller than $\epsilon_{\mathrm{TV}}$. Since $T$ is a logarithmic term, it will not heavily affect the sample complexity for the VP-based models. However, for the VE-based models, the reverse beginning error has an order of $1/\text{Poly}(T)$ and leads to a polynomial $T=1/\text{Poly}(\epsilon_{\mathrm{TV}})$, which will introduce additional $\epsilon_{\mathrm{TV}}$ dependence in the final sample complexity. On the contrary, $T=1$ for the RF-based models and will not influence the discretization complexity.


\section{The Discretization Complexity for One-step Rectified Flow}\label{sec: oen step}
Due to the linear interpolation operation, the RF-based methods enjoy a linear trajectory, which indicates it is possible to obtain a one-step generative model by using the distillation method when given a pre-trained vector field $\hat{v}_{\phi}$. Let 
$f^{v^*}: \mathbb{R}^d \times \mathbb{R}^{+} \rightarrow \mathbb{R}^d$ be the associated backward mapping of the generation ODE process with the optimal vector field. Then, we know that:
\begin{align*}
f^{v^*}\left(X_{t}, t\right)=X_{1-\delta}\,,\forall t\in [0,1-\delta]\,,
\end{align*}
which indicates the distillation goal is to find a one-step mapping function $f_{\theta}$ to approximate $f^{v^{*}}$.

Intuitively, the trajectory of a great enough multi-step model is linear, and a trivial distillation method is enough.
Based on this idea,  \citet{liu2023instaflow} propose InstaFlow with the following objective:
\begin{align*}
    \mathcal{L}_{\mathrm{Insta}}(\theta; \phi) = \mathbb{E}_{X_0\sim \mathcal{N}(0,I)}\left[\mathbb{D}(\mathrm{ODE}[\hat{v}_{\phi}], f_{\theta}(X_0))\right]\,,
\end{align*}
where $\mathrm{ODE}(\hat{v}_{\phi})$ is the output of the discretized sampling process with a given pre-trained $\hat{v}_{\phi}$ (\cref{def:predictor}) and $\mathbb{D}$ is a similarity loss (LPIPS loss is used in \citet{liu2023instaflow}). However, \citet{wang2024rectified} show that the consistency distillation method \citep{song2020sde} leads to significantly faster training and better performance compared with the trivial distillation since InstaFlow imposes a stronger constraint than self-consistency. The intuition of consistency distillation is mainly based on the following property of $f^{v^*}$:  
\begin{equation}\label{eq: the property of consistency function}
    \begin{aligned}
&f^{v^*}\left(X_{t_1}, t_1\right)=f^{v^*}\left(X_{t_2}, t_2\right), \forall\, 0 \leq t_1, t_2 \leq 1-\delta\,,\\
&f^{v^*}(X, 1-\delta)=X, \,\forall\,X \in \mathbb{R}^d\,.
\end{aligned}
\end{equation}
Let $0=t_0\leq t_1\leq \cdots\leq t_K=1-\delta$ be the discretization points used in the training process of the one-step generative models. Based on this property, \citet{song2023consistency}  propose the following consistency distillation objective function:
\begin{align}\label{eq:CD objective}
&\mathcal{L}_{\mathrm{Consistency}}^K\left(\theta, \theta^{-} ; \phi\right):=\mathbb{E}_{X_1}\left[\mathbb{E}_{X_{t_k}|X_1}\left\|f_{\theta}(X_{t_{k}}, t_{k})-f_{\theta^{-}}(\hat{X}_{t_{k+1}}^{\phi}, t_{k+1})\right\|_2^2\right]\,,
\end{align}
where $X_{t_k}|X_1$ is calculated by $X_{t_k} = t_kX_1+(1-t_k)Z$, where $Z$ is the standard Gaussian noise. To avoid the strong constraint on the output of the neural network, \citet{song2023consistency} parameter $f_{\theta}$ with the following form 
\begin{align*}
f_{\theta}(X, t)=\left\{\begin{array}{ll}Y & t=1-\delta \\ F_{\theta}(Y, t) & t \in[0, 1-\delta)\end{array}\right.\,,
\end{align*}
where $F_{\theta}(X,t)$ be a free-form deep neural network.
Furthermore, \citet{song2023consistency} adopt the exponential moving average strategy $\theta^{-}=\operatorname{stopgrad}\left(\mu \theta^{-}+(1-\mu) \theta\right)$ to make the training process more stable. For $\hat{X}_{t_{k+1}}^\phi$, it is  the output of running one step flow from $t_{k}$ to $t_{k+1}$ using the initial distribution $X_{t_k}\sim q_{t_k}$ and approximated velocity field $\hat{v}_{\phi}$:
\begin{align}\label{eq:onestep flow}
\hat{X}_{t_{k+1}}^\phi=X_{t_k}+\hat{v}_{\phi}\left(t_k,X_k\right)(t_{k+1}-t_k)
\end{align}

\paragraph{Discretization Complexity.} One important problem for the training process of one-step models is the choice of discretization number $K$ (discretization complexity). If choosing a small $K$, the one-step error from $X_{t_k}$ to $X_{t_{k+1}}^{\phi}$ will be large and introduce hardness in the training process. When $K$ is large, the training process is time-consuming. Hence, it is necessary for one-step RF models to determine the discretization complexity $K$ in the training process.  

 The core challenge in obtaining the discretization complexity $K$ is the one-step predictor error introduced by \cref{eq:onestep flow}. As discussed in \cref{sec: multistep}, lemma \ref{vperturbation} has a better dependence compared with the previous perturbation lemma and can be directly used in the analysis of one-step models. As a result, we achieve the first discretization complexity result for one-step RF-based models, which is also better than previous one-step diffusion-based models. 

Before presenting our main result, we introduce two additional assumptions regarding the one-step mapping function $f_{\theta}$. We first assume that after a one-step predictor, the output of $f_{\theta}$ is still close, which is widely used in the previous works \citep{lyu2023convergenceconsistencyv1,li2024towardsconsistencyv2,yang2025improved}.
\begin{assumption}\label{ass:onestepfmatcherror}
There exists a constant \( \varepsilon_{\mathrm{cm}} \) such that for any \( k \in [K] \) and \( X_{t_k} \sim q_{t_k} \):
\[
\mathbb{E} \left[ \left\| f_\theta (X_{t_k}, t_k) - f_\theta (\hat{X}_{t_{k+1}}^\phi, t_{k+1}) \right\|_2^2 \right] \leq \epsilon_{cm}^2 \left( t_{k+1} - t_k \right)^2.
\]
\end{assumption}
Since this assumption has the same form as the consistency distillation objective function (\cref{eq:CD objective}), it can be satisfied as long as the loss function is small.

We also assume the learned one-step mapping function is $L_f$-Lipschitz.
\begin{assumption}\label{ass:flip}
    $f_\theta (X_{t}, t)$ is $L_f$-Lipschitz for any t in $[0,1-\delta]$.
\end{assumption}
This assumption is a standard assumption and has been widely used in all previous works on one-step generative models \citep{song2023consistency,lyu2023convergenceconsistencyv1,li2024towardsconsistencyv2,doutheoryconsistency}.

Let $f_{\theta,0}$ be the learned one-step mapping function at $t=0$ and $h_k=t_k-t_{k-1}$ be the stepsize at the $k$-th step. As shown in \citet{yang2025improved}, a gradually decay EDM stepsize \citep{karras2022elucidatingvesde} in the training phase is helpful in improving the discretization complexity of one-step generation models. Hence, we also adopt the EDM stpesize
\begin{align*}
1-t_k=(\delta+k h)^a \text { and } h=\left(1-\delta\right) /  K
\end{align*}
 in the discretization analysis of one-step RF models, where $a\in [1,+\infty)$ is the parameter of EDM stepsize. When $a$ is equal to $1$, the stepsize is a uniform one. When $a$ goes to $+\infty$, the stepsize becomes the exponential-decay stepsize.

With these assumptions and EDM stepsize, we achieve the first discretization complexity $K$ for the one-step RF-based models. Note that this complexity is the discretization complexity in the training process, and the complexity $N$ of multi-step models is the sample complexity in the sampling process (\cref{sec: multistep}).


\begin{restatable}{theorem}{thminstaflowresults}\label{thm:instaflow_results}
Assume \cref{ass: Manifold assumption}, \ref{ass:score approxiamation assumption}, \ref{ass:onestepfmatcherror}, \ref{ass:flip} holds and consider the EDM stepsize. Then, one-step generation error $W_2\left(f_{\theta, 0} \sharp \mathcal{N}\left(0,  I_d\right), q^*\right)$ is bounded by
\begin{align*}
\frac{L_fR^2(R \vee \sqrt{d})}{\delta^{2+1/a}K} + L_f\varepsilon_{sc}+ \varepsilon_{cm}+(\sqrt{d}+R)\delta\,.
\end{align*}
Furthermore, by choosing $\delta=\epsilon_{W_2}/(\sqrt{d}+R)$, $\epsilon_{cm}\leq \epsilon_{W_2}$ and  $\epsilon_{sc}\leq \epsilon_{W_2}/L_f$, the output is $\epsilon_{W_2}$-close to $q^*$ with discretization complexity
\begin{align*}
K=O\left(\frac{L_fR^2(\sqrt{d}+R)^{2+1/a}(R \vee \sqrt{d})}{\epsilon_{W_2}^{3+1/a}}\right)\,.
\end{align*}
\end{restatable}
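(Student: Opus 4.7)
I start by splitting the target via triangle inequality:
\begin{align*}
W_2\bigl(f_{\theta,0}\sharp\mathcal{N}(0,I_d), q^*\bigr) \leq W_2\bigl(f_{\theta,0}\sharp\mathcal{N}(0,I_d), q_{1-\delta}\bigr) + W_2(q_{1-\delta}, q^*).
\end{align*}
For the second piece, coupling $X_{1-\delta}=\delta Z+(1-\delta)X_1$ with $X_1$ via (\ref{eq:linear interpolation}) and Assumption \ref{ass: Manifold assumption} gives $W_2(q_{1-\delta},q^*)\leq\delta(\sqrt{d}+R)$. For the first piece, I couple by propagating a common $X_0\sim\mathcal{N}(0,I_d)$ along the exact PFODE (\ref{eq:the PFODE}) to obtain $X_{1-\delta}\sim q_{1-\delta}$, reducing the task to bounding $\sqrt{\mathbb{E}\|f_{\theta,0}(X_0)-X_{1-\delta}\|^2}$. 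Using the boundary condition $f_\theta(\cdot,1-\delta)=\mathrm{Id}$, I telescope
\begin{align*}
f_{\theta,0}(X_0)-X_{1-\delta}=\sum_{k=0}^{K-1}\bigl[f_\theta(X_{t_k},t_k)-f_\theta(X_{t_{k+1}},t_{k+1})\bigr],
\end{align*}
and for each $k$ insert the Euler predictor $\hat{X}^\phi_{t_{k+1}}$ from (\ref{eq:onestep flow}). Each summand is then split into a consistency-type piece (controlled by Assumption \ref{ass:onestepfmatcherror}, contributing $\varepsilon_{cm}(t_{k+1}-t_k)$ which Minkowski-sums to $\varepsilon_{cm}$) and $L_f\,\|\hat{X}^\phi_{t_{k+1}}-X_{t_{k+1}}\|$ via Assumption \ref{ass:flip}.

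\paragraph{One-step Euler error via Lemma \ref{vperturbation}.}
Write
\begin{align*}
\hat{X}^\phi_{t_{k+1}}-X_{t_{k+1}}=(t_{k+1}-t_k)\bigl(\hat{v}_\phi-v^*\bigr)(t_k,X_{t_k})+\int_{t_k}^{t_{k+1}}\bigl[v^*(t_k,X_{t_k})-v^*(s,X_s)\bigr]ds.
\end{align*}
The first piece yields $\varepsilon_{sc}(t_{k+1}-t_k)$ in $L^2$ by Assumption \ref{ass:score approxiamation assumption}. For the second, the chain rule along the flow gives $\frac{d}{du}v^*(u,X_u)=\partial_u v^*(u,X_u)+\nabla v^*(u,X_u)\,v^*(u,X_u)$. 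Using Lemma \ref{vperturbation} for $\mathbb{E}\|\partial_u v^*(u,X_u)\|^2\lesssim R^4(R^2\vee d)/(1-u)^8$, the spatial Lipschitz estimate $\|\nabla v^*\|\leq R^2/(1-u)^3$, and the conditional-Jensen bound $\mathbb{E}\|v^*(u,X_u)\|^2\leq R^2+d$ inherited from $v^*=\mathbb{E}[X_1-Z\mid X_u]$, two applications of Jensen/Fubini produce the key per-step estimate
\begin{align*}
\sqrt{\mathbb{E}\|\hat{X}^\phi_{t_{k+1}}-X_{t_{k+1}}\|^2}\,\lesssim\, \varepsilon_{sc}(t_{k+1}-t_k)+\frac{R^2(R\vee\sqrt{d})\,(t_{k+1}-t_k)^2}{(1-t_k)^4}.
\end{align*}

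\paragraph{EDM summation and the main obstacle.}
Minkowski summation of the consistency and score-error pieces gives $\varepsilon_{cm}+L_f\varepsilon_{sc}$, so the crux is to sum the discretization contribution $\sum_k (t_{k+1}-t_k)^2/(1-t_k)^4$ against the EDM schedule. The schedule refines steps near the singularity at $t=1-\delta$: using the change of variables $u=1-t$ with $|t_{k+1}-t_k|\approx(a/K)(1-t_k)^{(a-1)/a}$ converts the sum into $\int_\delta^{1} u^{-(3a+1)/a}\,du$ (times $1/K$), whose boundary layer near $u=\delta$ evaluates to $O\bigl(1/(K\,\delta^{2+1/a})\bigr)$. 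The main obstacle is tuning this balance cleanly: the $(1-u)^{-8}$ blow-up supplied by Lemma \ref{vperturbation} is exactly what makes the EDM exponent $a$ enter as $\delta^{-2-1/a}$, and one must check that the cross term $\nabla v^*\cdot v^*$ (of order $(1-u)^{-6}$) and other lower-order contributions are absorbed. Assembling everything gives the $W_2$-bound $L_f R^2(R\vee\sqrt{d})/(\delta^{2+1/a}K)+L_f\varepsilon_{sc}+\varepsilon_{cm}+(\sqrt{d}+R)\delta$; plugging in $\delta=\epsilon_{W_2}/(\sqrt{d}+R)$ and the hypotheses $\varepsilon_{cm}\leq\epsilon_{W_2}$, $\varepsilon_{sc}\leq\epsilon_{W_2}/L_f$ yields the claimed complexity.
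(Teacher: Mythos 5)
Your proposal is correct and follows essentially the same route as the paper: triangle inequality to $q_{1-\delta}$, telescope $f_\theta$ across the grid using the boundary condition, split each increment via Assumption~\ref{ass:onestepfmatcherror} (giving $\varepsilon_{cm}$) and Assumption~\ref{ass:flip} (giving $L_f$ times the one-step Euler error), then sum against the EDM schedule. Two cosmetic differences: the paper bounds the one-step Euler error by invoking its Lemma~\ref{prediction step} (Gr\"{o}nwall on the discretized versus exact flow), whereas you expand it directly as a score-error term plus $\int_{t_k}^{t_{k+1}}[v^*(t_k,X_{t_k})-v^*(s,X_s)]\,ds$ and bound the remainder via the chain rule — arriving at the same per-step estimate. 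Also, note that Lemma~\ref{vperturbation} already bounds the \emph{total} derivative $\frac{d}{dt}v^*(t,X_t)=\partial_t v^*+\nabla v^*\cdot v^*$ along the flow (see the first line of its proof), so your separate bound on the transport term $\nabla v^*\cdot v^*$ is correct but redundant.
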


As shown in \cref{tab:iteration results}, this result is better than the previous one-step VP-based and VE-based models and makes the first step to explain why one-step RF-based models have a great performance in application. For the VP-based models, \citet{lyu2023convergenceconsistencyv1} achieve $L_f/\epsilon_{W_2}^7$ result, which has a worse $W_2$ dependence compared with our results. \citet{li2024towardsconsistencyv2} and \citet{doutheoryconsistency} achieve the discretization complexity $L_f^3/\epsilon_{W_1}$ and $L_f^2/\epsilon_{W_1}^{10}$ with a weaker $W_1$ guarantee. Furthermore, \citet{li2024towardsconsistencyv2} have a worse $L_f$ dependence, and \citet{doutheoryconsistency} have a worse $L_f$ and $\epsilon_{W_1}$ dependence. As shown in Theorem C.1 of \citet{lyu2023convergenceconsistencyv1}, $L_f$ has an order of $e^{C_2R^2}$ for some $C_2\ge0$, which indicates $L_f$ is a large Lipschitz constant. For the VE-based models, due to an additional polynomial $T^{1/a}$ (the order of $T$ is $1/\epsilon_{W_2}$, discussed in \cref{subsec:technique novelty}), the $L_{f}/\epsilon_{W_2}^{3+2/a}$ result is worse than our results.

\section{Conclusion}\label{sec:conclusion}
In this work, we analyze the RF-based models with a deterministic sampler under the multi-step and one-step settings. We begin by thoroughly examining the properties of RF-based models, demonstrating that they exhibit improved dependence on the early stopping parameter $\delta$ and diffusion time $T$ compared with VP, VE and DFM-based models. Then, based on these observations, we achieve the first polynomial discretization complexity for the multi-step and one-step RF-based models simultaneously, which both enjoy a better dependence compared with the results of diffusion-based models. Thus, this work takes the first step towards explaining why RF-based models achieve state-of-the-art performance from the perspective of discretization complexity.
\paragraph{Future Work and Limitation.} In this work, we directly assume the vector field and the one-step mapping function are accurate enough and mainly focus on the sample (discretization) complexity for the RF-based models. It is an interesting future work to analyze the statistical complexity and the training dynamics of multi-step and one-step RF-based models, which explains the success of RF-based models from the learning perspective and leads to an end-to-end analysis.

\appendix
\newpage
\onecolumn
\section*{Appendix}
\section{Notation}\label{sec:app notation}
\paragraph{Common notations.}
\begin{itemize}
    \item $X_0=Z\sim \mathcal{N}(0,I_d)$: the standard Gaussian distribution. $X_1\sim q^*$: the target data.
    \item $X_t=(1-t)Z+tX_1, t\in [0,1]$: the random variable for the linear interpolation. 
    
    $(q_t)_{t\in [0,1]}$: the marginal distribution of $(X_t)_{t\in [0,1]}$.
    \item $v^{*}(t,X)$ is the ground truth velocity field.  $\hat{v}(t,X)$: the estimated velocity field.
    
\end{itemize}

\paragraph{Multi-step RF-based models with deterministic sampler and ULD corrector.}
We define by $h_{\mathrm{pred}}$ and $h_{\mathrm{corr}}$ the stepsize of predictor and corrector and $N'$ the total number of stages of \cref{algo:algo1}. Let $p_{1-\delta}^{\operatorname{ULMC}}$ be the output of \cref{algo:algo1}. The iteration complexity contains two parts: the predictor and the corrector iteration complexity. The goal is to guarantee $\operatorname{TV}(p_{1-\delta}^{\operatorname{ULMC}},q_{1-\delta})\leq \epsilon_{\operatorname{TV}}$ and $W_2(q_{1-\delta},q^*)\leq \epsilon_{W_2}$.
\begin{itemize}
    \item The predictor iteration complexity: $N'T_{\mathrm{pred}}/h_{\mathrm{pred}}= (1-\delta))/h_{\mathrm{pred}}$.
    \item The corrector iteration complexity: $\frac{N'T_{\mathrm{corr}}}{h_{\mathrm{corr}}}$ (Since the corrector runs $T_{\mathrm{corr}}$ time at each stage). 
    \item For distribution $q$, we denote $qP_{flow}^{t_0,h}$ as running the flow ode at time $t_0$ with step size $h$ using the ground truth velocity. 
    \begin{align*}
   \mathrm{d} X_t=v^*\left(t, X_t\right)\mathrm{~d} t, \quad X_{t_0}\sim q . \quad X_{t_0+h} \sim qP_{flow}^{t_0,h}
\end{align*}
    \item 
For distribution $q$, we denote $q\hat{P}_{flow}^{t_0,h}$ as running the discretized flow ode at time $t_0$ with stepsize $h$ using the approximated velocity field.
\begin{align*}
    \mathrm{d} \hat{X}_t=\hat{v}\left(t_{0}, \hat{X}_{t_0}\right)\mathrm{~d} t\,,\hat{X}_{t_0}\sim q. \quad \hat{X}_{t_0+h} \sim q\hat{P}_{flow}^{t_0,h}
\end{align*}
    \item $P_{\mathrm{ULD}}$ is the output of running the continuous underdamped Langevin corrector for time $h$.
    
    $\hat{P}_{\text{ULMC}}$ are the corresponding implementable algorithm with estimated score.
\end{itemize}

\paragraph{One-step RF-based models.}
\begin{itemize}
    \item The goal of one-step RF-based models is to learn a one-step mapping function $f_\theta\left(X, t\right)$ to directly map pure noise $X\sim \mathcal{N}(0,I_d)$ and $t=0$ to $q^*$.
    \item  $f_{\theta,0}$: the learned one-step mapping function at $t=0$.
    \item $0=t_0\leq t_1\leq \cdots\leq t_K=1-\delta$: the discretization points used in the training process of the one-step generative models. $h_k=t_k-t_{k-1}\equiv h,$ $\forall k\in \{1,\cdots,K\}$: the uniform stepsize. 
    \item We denote by $f_{\theta, 0} \sharp \mathcal{N}\left(0, I_d\right)$ the generated distribution of the above operation. Since the one-step mapping function is one step, the discretization complexity is the requirement of $K=(1-\delta)/h$ in the training process (\cref{eq:CD objective}) to guarantee $W_2(f_{\theta, 0} \sharp \mathcal{N}\left(0, I_d\right),q^*)\leq \epsilon_{W_2}$.
\end{itemize}

\section{The Detailed Calculation of Previous work}\label{subsec: detailed cal previous work}

\paragraph{The early stopping parameter $\delta$ of \citet{silveri2024theoreticalflowkl}.}
As shown in Eq. (5) of \citet{silveri2024theoreticalflowkl}, the stochastic linear interpolation has the following form:
\begin{align*}
    X_t = (1-t)Z+tX_1+\sqrt{2t(1-t)}Z\,,t\in [0,1]\,,
\end{align*}
which indicates 
\begin{align*}
W_2^2\left(q_{1-\delta}, q_1\right) & \leq \mathbb{E}\left[\left\|X_{1-\delta}-X_1\right\|_2^2\right]=\mathbb{E}\left[\left\|\delta\left(\mathrm{Z}-X_1\right)+\sqrt{2\delta(1-\delta)Z}\right\|_2^2\right] \\
& =\delta^2\left(\mathbb{E}\left[\|Z\|_2^2\right]+\mathbb{E}\left[\left\|X_1\right\|_2^2\right]\right)+\delta(\mathbb{E}\left[\|Z\|_2^2\right]\leq (R^2+d)\delta^2+d\delta\,.
\end{align*}
To make the above inequality smaller than $\epsilon_{W_2}^2$, the $\delta$ has the order of $\epsilon_{W_2}^2$.

\paragraph{The results of \citet{doutheoryconsistency}.} 

As shown in Theorem 4.1 of \citet{doutheoryconsistency}, when considering $W_1$ distance, the total error has the following form (Here, we only discuss the dependence on $\epsilon_{W_1}$ and ignore $R, d$.) 
\begin{align*}
    \frac{L_fL_{S,X}}{\sqrt{M}}+\sqrt{\delta}\,,
\end{align*}
where $M$ is the number of the discretization steps. To make the above term smaller than $\epsilon_{W_1}$, we require 
\begin{align*}
    M\ge  \frac{L_f^2L_{S,X}^2}{\epsilon_{W_1}^2} \text{ and } \delta\leq \epsilon_{W_1}^2\,.
\end{align*}
As shown in \citet{chen2022sampling}, we know that $\sigma_{\delta}^2=\delta$ for VPSDE forward process, and we need to choose $L_{\text{score}}=R^2/\sigma_{\delta}^4=R^2/\delta^2$, which finally requires 
\begin{align*}
    M\ge \frac{L_f^2}{\epsilon_{W_1}^{10}}\,.
\end{align*}

\section{Property of Optimal Vector Field and Vector Perturbation Lemma}\label{subsec:the property of optimal vector}

In this section, we first introduce auxiliary lemmas on the vector field. Then, we provide the proof for our vector perturbation lemma. Similar with \cite{gao2024convergence}, we define some notation, which is  useful in this section.
$$M_1 := \mathbb{E}[X_1|X_t = X], 
M_2 := \mathbb{E}[X_1^T X_1|X_t = X], 
M_2^c := \operatorname{Cov}(X_1|X_t = X)\,,  
$$
and $M_3 := \mathbb{E}[X_1 X_1^T X_1|X_t = X]$.

\begin{lemma}\label{lem9}
$v^*(t,X)$ can be expressed by moment as follows:
$$v^*(t,X)=-\frac1{1-t}X+\frac1{1-t}M_1,\quad(t,X)\in[0,1)\times\mathbb{R}^d.$$  
\end{lemma}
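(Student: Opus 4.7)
The plan is to derive the identity by exploiting the definition $v^*(t,X)=\mathbb{E}[X_1-Z\mid X_t=X]$ together with the linear-interpolation relation $X_t=(1-t)Z+tX_1$. Since $1-t>0$ on $[0,1)$, the interpolation can be solved algebraically for $Z$, giving $Z=\frac{1}{1-t}(X_t - tX_1)$, which is valid as an identity between random variables.

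The first step is to take conditional expectation of this identity given $X_t=X$. By linearity of conditional expectation,
\begin{align*}
\mathbb{E}[Z\mid X_t=X] \;=\; \frac{1}{1-t}\bigl(X - t\,\mathbb{E}[X_1\mid X_t=X]\bigr) \;=\; \frac{X - tM_1}{1-t}.
\end{align*}
The second step is to substitute this into the definition of $v^*$:
\begin{align*}
v^*(t,X) \;=\; \mathbb{E}[X_1\mid X_t=X]-\mathbb{E}[Z\mid X_t=X] \;=\; M_1 - \frac{X-tM_1}{1-t}.
\end{align*}
The third step is a routine algebraic simplification, combining the $M_1$ terms over the common denominator $1-t$:
\begin{align*}
M_1 - \frac{X-tM_1}{1-t} \;=\; \frac{(1-t)M_1 + tM_1 - X}{1-t} \;=\; \frac{M_1}{1-t} - \frac{X}{1-t},
\end{align*}
which is exactly the claimed expression.

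There is no real obstacle here; the only subtlety is ensuring that the rearrangement $Z=(X_t-tX_1)/(1-t)$ is well-defined, which is guaranteed by restricting to $t\in[0,1)$ so that $1-t>0$, and that conditional expectation commutes with the linear map, which is standard. The identity is then purely algebraic.
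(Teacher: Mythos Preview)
Your proof is correct and follows essentially the same approach as the paper: both use the identity $X=\mathbb{E}[(1-t)Z+tX_1\mid X_t=X]$ to express $\mathbb{E}[Z\mid X_t=X]$ in terms of $X$ and $M_1$, then substitute into the definition of $v^*$. Your version is simply more explicit about the algebra.
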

\begin{proof}
    Using the fact that $X = \mathbb{E} \left[ (1 - t)Z + tX_1 \mid X_t = X \right]$, we can do an easy transformation of $v^*(t,X)=\mathbb{E}\left[X_1-Z |X_t=X\right]$ to finish the proof.
\end{proof}

In the following two lemmas, we provide the order of Lipschitz constant $L_{v,X}$ and $L_{v,t}$. 

\begin{lemma}\label{lem10}
The $\nabla_Xv^*(t,X)$ term can be expressed in covariance as follows:$$\nabla_Xv^*(t,X)=\frac t{(1-t)^3}M_{2}^{c}-\frac1{1-t}\mathrm{I}_d,\quad(t,X)\in[0,1)\times\mathbb{R}^d.$$
\end{lemma}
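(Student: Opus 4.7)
The plan is to leverage Lemma \ref{lem9} to reduce everything to computing the Jacobian $\nabla_X M_1$, then identify this Jacobian with a posterior covariance via a second-order Tweedie-type identity. Differentiating the expression in Lemma \ref{lem9} directly gives
\[
\nabla_X v^*(t,X) = -\frac{1}{1-t} I_d + \frac{1}{1-t}\nabla_X M_1,
\]
so the whole task is to show $\nabla_X M_1 = \tfrac{t}{(1-t)^2} M_2^c$.

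To do that, I would write out the conditional density explicitly: since $X_t \mid X_1 = x_1 \sim \mathcal{N}(tx_1, (1-t)^2 I_d)$, Bayes' rule gives
\[
q_t(X) = \mathbb{E}_{X_1 \sim q^*}\!\left[\phi_{(1-t)^2 I}(X - tX_1)\right], \qquad M_1(X) = \frac{\mathbb{E}_{X_1}[X_1\,\phi_{(1-t)^2 I}(X - tX_1)]}{q_t(X)}.
\]
Then I would compute $\nabla_X q_t$ and $\nabla_X \bigl(X_1 \phi_{(1-t)^2 I}(X-tX_1)\bigr)$ using $\nabla_X \phi_{\sigma^2 I}(X-tx_1) = -\sigma^{-2}(X-tx_1)\phi_{\sigma^2 I}(X-tx_1)$, divide through, and apply the quotient rule. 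The cross terms reassemble into a posterior second moment minus an outer product of first moments, i.e.\ $\operatorname{Cov}(X_1 \mid X_t = X) = M_2^c$, times $t/(1-t)^2$, with the $I_d$ pieces canceling cleanly.

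An alternative route, which I would use as a cross-check, is to start from the score identity in Eq.~\eqref{score velocity transform}, rewrite $M_1 = \tfrac{1}{t}\bigl(X + (1-t)^2 \nabla_X \log q_t(X)\bigr)$, and then use the standard Hessian-of-log-density formula
\[
\nabla_X^2 \log q_t(X) = \frac{t^2}{(1-t)^4} M_2^c - \frac{1}{(1-t)^2} I_d,
\]
which again follows from differentiating $q_t$ twice under the integral sign. Substituting back gives the same identity for $\nabla_X M_1$.

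The only real obstacle is bookkeeping: making sure the matrix-valued quotient rule is applied correctly so that $\mathbb{E}[X_1 X_1^T \mid X_t{=}X]$ and $M_1 M_1^T$ appear with the right signs to form $M_2^c$, and that the two $-\tfrac{1}{t} I_d$ contributions from the $\nabla_X$ acting on the Gaussian exponent cancel against the $\tfrac{1}{t} I_d$ coming from the $\tfrac{X}{t}$ term. Once that cancellation is verified, combining with the $-\tfrac{1}{1-t} I_d$ from differentiating $-\tfrac{X}{1-t}$ yields exactly the claimed expression.
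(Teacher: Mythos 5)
Your proposal is correct, and the key identity $\nabla_X M_1 = \frac{t}{(1-t)^2}\,M_2^c$ is exactly what is needed: differentiating Lemma~\ref{lem9} gives $\nabla_X v^* = -\frac{1}{1-t}I_d + \frac{1}{1-t}\nabla_X M_1$, and substituting yields the stated formula. Both routes you sketch check out — the direct quotient rule on $M_1(X)=\mathbb{E}_{X_1}[X_1\phi(X-tX_1)]/q_t(X)$ produces $\frac{t}{(1-t)^2}\bigl(\mathbb{E}[X_1X_1^\top\mid X_t{=}X]-M_1M_1^\top\bigr)$ after the $M_1 X^\top/(1-t)^2$ terms from the numerator and denominator derivatives cancel, and the Hessian-of-log-density formula $\nabla_X^2\log q_t = \frac{t^2}{(1-t)^4}M_2^c - \frac{1}{(1-t)^2}I_d$ combined with $M_1 = \frac{1}{t}(X + (1-t)^2\nabla\log q_t)$ gives the same thing. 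One minor point: your bookkeeping description conflates the two routes — in the direct quotient-rule route there are no $\frac{1}{t}I_d$ terms (the cancellation is between matched $M_1X^\top$ terms, not $I_d$'s), and the $\frac{1}{t}I_d$ cancellation you describe occurs only in the score-based second route, and there is a single such cancellation, not two. This does not affect the validity of the argument.

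For reference, the paper does not give its own proof of this lemma; it explicitly imports it as Lemma~3.3 of \citet{gao2024convergence}. Your derivation is thus a genuine reconstruction rather than a comparison target, and it uses the standard second-order Tweedie/conditional-moment identity that underlies virtually every published proof of this fact.
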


\begin{lemma}\label{lem11}
The time derivative of the
velocity field $v^{*}$ can be expressed by moment as follows:
$\partial_tv^*(t,X)=-\frac{1}{(1-t)^2}X+\frac{1}{(1-t)^2}M_1+\frac{t+1}{(1-t)^4}M_2^cX-\frac{t}{(1-t)^4}(M_3-M_2M_1)$.
\end{lemma}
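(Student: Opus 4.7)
The plan is to start from Lemma \ref{lem9}, which gives $v^*(t,X) = -\frac{1}{1-t}X + \frac{1}{1-t}M_1$, and differentiate both sides in $t$. The first two summands produce $-\frac{1}{(1-t)^2}X + \frac{1}{(1-t)^2}M_1$ immediately, matching the first two terms of the claimed formula. The remaining work is therefore to show $\frac{1}{1-t}\partial_t M_1 = \frac{t+1}{(1-t)^4}M_2^c X - \frac{t}{(1-t)^4}(M_3 - M_2 M_1)$; equivalently, $\partial_t M_1 = \frac{1+t}{(1-t)^3}M_2^c X - \frac{t}{(1-t)^3}(M_3 - M_2 M_1)$.

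For $\partial_t M_1$, I would use Bayes's rule. Since $X_t \mid X_1 \sim \mathcal{N}(tX_1,(1-t)^2 I_d)$, let $\phi_t(x_1;X)$ denote the Gaussian factor $(2\pi(1-t)^2)^{-d/2}\exp(-\|X-tx_1\|^2/(2(1-t)^2))$. Then $M_1$ is a ratio of $\int x_1 q^*(x_1)\phi_t\,dx_1$ over $\int q^*(x_1)\phi_t\,dx_1$. Differentiating the ratio in $t$ and rearranging yields the standard parameter-differentiation identity $\partial_t M_1 = \operatorname{Cov}(X_1,\,\partial_t \log \phi_t(X_1;X) \mid X_t = X)$, with covariance taken under the conditional law of $X_1$ given $X_t = X$.

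Next, by direct differentiation, $\partial_t \log \phi_t(X_1;X) = \frac{d}{1-t} + \frac{X_1^\top X - t\|X_1\|^2}{(1-t)^2} - \frac{\|X - tX_1\|^2}{(1-t)^3}$. The constant term $\frac{d}{1-t}$ and the $\|X\|^2$-part of the last summand are deterministic in $X_1$ and drop out of the covariance. The remaining pieces involve only $X_1^\top X$ and $\|X_1\|^2$, and the definitions of $M_2, M_3, M_2^c$ give $\operatorname{Cov}(X_1, X_1^\top X \mid X_t = X) = M_2^c X$ and $\operatorname{Cov}(X_1, \|X_1\|^2 \mid X_t = X) = M_3 - M_2 M_1$. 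Collecting coefficients via $\frac{1}{(1-t)^2} + \frac{2t}{(1-t)^3} = \frac{1+t}{(1-t)^3}$ and $\frac{t}{(1-t)^2} + \frac{t^2}{(1-t)^3} = \frac{t}{(1-t)^3}$ finishes the derivation.

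The main obstacle is bookkeeping rather than genuine difficulty: $M_1$ and $M_3$ are vectors, $M_2$ is a scalar, and $M_2^c$ is a matrix, so care is needed tracking tensor shapes when expanding $\|X - tX_1\|^2$ and combining the linear-in-$X_1$ and quadratic-in-$X_1$ contributions. Interchanging $\partial_t$ with the integrals in the parameter-differentiation identity is legitimate because Assumption \ref{ass: Manifold assumption} makes $q^*$ compactly supported and $\phi_t$ is smooth in $t$ on $[0,1-\delta]$, so dominated convergence applies.
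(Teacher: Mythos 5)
Your proof is correct. The paper does not itself prove Lemma~\ref{lem11}; it attributes it to Lemma~3.4 of Gao et al.\ (2024), so there is no in-paper argument to compare against. The route you take is the standard one: differentiate Lemma~\ref{lem9}, reduce the problem to $\partial_t M_1$, and apply the parameter-differentiation (score-covariance) identity $\partial_t M_1 = \operatorname{Cov}\bigl(X_1,\ \partial_t\log\phi_t(X_1;X)\mid X_t=X\bigr)$ for the Gaussian transition density $\phi_t$. Your bookkeeping checks out: with $\partial_t\log\phi_t = \frac{d}{1-t}+\frac{X_1^\top X - t\|X_1\|^2}{(1-t)^2}-\frac{\|X-tX_1\|^2}{(1-t)^3}$, the $X_1$-independent pieces drop under the covariance, the surviving part collects to $\frac{1+t}{(1-t)^3}X_1^\top X - \frac{t}{(1-t)^3}\|X_1\|^2$ by the two coefficient identities you record, and $\operatorname{Cov}(X_1,X_1^\top X\mid X_t)=M_2^c X$, $\operatorname{Cov}(X_1,\|X_1\|^2\mid X_t)=M_3-M_2M_1$ then give $\partial_t M_1 = \frac{1+t}{(1-t)^3}M_2^cX-\frac{t}{(1-t)^3}(M_3-M_2M_1)$; scaling by $\frac{1}{1-t}$ and adding the two elementary terms reproduces the statement. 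The one small point to tighten in a write-up: besides the compactness and smoothness you cite for dominated convergence, note that the normalizing denominator $\int q^*\phi_t\,\mathrm{d}x_1 = q_t(X)$ is strictly positive for every $X$ and $t\in[0,1-\delta]$ (a compactly supported measure convolved with a nondegenerate Gaussian), so the quotient rule underlying the covariance identity is legitimate.
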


We also provide the order of the early stopping parameters $\delta$.
\begin{lemma}\label{lem:orderofdelta}
Assume \cref{ass: Manifold assumption} holds. Then, we have the following bound:
\begin{align*}
    W_2^2(q_{1-\delta},q^*)\leq (d+R^2)\delta^2\,.
\end{align*}
\end{lemma}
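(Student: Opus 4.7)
The plan is to exhibit an explicit coupling between $q_{1-\delta}$ and $q^*$ coming directly from the linear interpolation, and then bound the expected squared distance under that coupling. Specifically, take $X_1 \sim q^*$ and $Z \sim \mathcal{N}(0,I_d)$ independent, and define $X_{1-\delta} = (1-(1-\delta))Z + (1-\delta)X_1 = \delta Z + (1-\delta)X_1$, so that $X_{1-\delta}$ has law $q_{1-\delta}$ by construction of the interpolated process. The pair $(X_{1-\delta}, X_1)$ is then a valid coupling, giving the bound $W_2^2(q_{1-\delta}, q^*) \le \mathbb{E}\|X_{1-\delta} - X_1\|^2 = \delta^2\, \mathbb{E}\|Z - X_1\|^2$.

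Next, I would expand the right-hand side using independence of $Z$ and $X_1$ and $\mathbb{E}[Z] = 0$:
\begin{align*}
\mathbb{E}\|Z - X_1\|^2 = \mathbb{E}\|Z\|^2 + \mathbb{E}\|X_1\|^2 - 2\,\mathbb{E}\langle Z, X_1\rangle = d + \mathbb{E}\|X_1\|^2.
\end{align*}
Finally, I would invoke \cref{ass: Manifold assumption}: since $0 \in \mathcal{M}$ and $R = \sup\{\|x-y\|: x,y \in \mathcal{M}\}$, every $X_1 \in \mathcal{M}$ satisfies $\|X_1\| = \|X_1 - 0\| \le R$, hence $\mathbb{E}\|X_1\|^2 \le R^2$. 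Combining yields $W_2^2(q_{1-\delta}, q^*) \le \delta^2(d + R^2)$, as claimed.

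There is really no obstacle here; the only point to be careful about is identifying the law of $X_{1-\delta}$ with the marginal $q_{1-\delta}$ (immediate from the definition \cref{eq:linear interpolation}) and using the fact that $\mathcal{M}$ contains the origin to pass from the diameter $R$ to a uniform norm bound on $X_1$. The argument is short enough that it could be a single displayed chain of inequalities.
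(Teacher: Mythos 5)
Your proof is correct. The paper does not prove this lemma itself but instead cites Lemma 4.3 of \citet{gao2024convergence}; your argument — couple $X_{1-\delta} = \delta Z + (1-\delta)X_1$ with $X_1$ along the interpolation, expand $\mathbb{E}\|\delta(Z-X_1)\|^2 = \delta^2(d + \mathbb{E}\|X_1\|^2)$ using independence and $\mathbb{E}[Z]=0$, and bound $\|X_1\|\le R$ via $0\in\mathcal{M}$ — is exactly the standard derivation that cited result uses.
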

Lemma \ref{lem10}, Lemma \ref{lem11} and Lemma \ref{lem:orderofdelta} comes from Lemma 3.3, Lemma 3.4 and Lemma 4.3 of \cite{gao2024convergence}.

\subsection{Proof for the Vector Perturbation Lemma}
\label{v-perturbationlemma}
\vperturbation*

\begin{proof}
\quad First, we have $$\partial_t v^*(t,X_t) = \partial_t v^*(t,X)|_{X=X_t} + \nabla_Xv^*(t,X_t)v^*(t,X_t)\,,$$
which indicates 
$$\mathbb{E}\|\partial_t v^*(t,X_t)\|^2 \lesssim \mathbb{E}\|\partial_t v^*(t,X)|_{X=X_t}\|^2 + \mathbb{E}\|\nabla_Xv^*(t,X_t)v^*(t,X_t)\|^2$$
    
    By the relation of $\partial_t v^*(t,X)$ with the conditional moments (Lemma \ref{lem11}), we have that
    \begin{align*}
        \partial_tv^*(t,X)&=-\frac{1}{(1-t)^2}X+\frac{1}{(1-t)^2}M_1+\frac{t+1}{(1-t)^4}M_2^cX-\frac{t}{(1-t)^4}(M_3-M_2M_1)\\&\lesssim \frac{\left(R^2\|X\|_2\right) \vee R^3}{(1-t)^4}\,.
    \end{align*}
We also know that 
\begin{align*}
    \underset{X \sim q_t}{\mathbb{E}}\|X\|^2=\mathbb{E}\|(1-t)X_0+tX_1\|^2\lesssim(1-t)^2\mathbb{E}\|X_0\|^2\lesssim(1-t)^2d+t^2R^2\lesssim R^2 \vee d\,.
\end{align*}
Then, we can bound the first term by $$\mathbb{E}\|\partial_t v^*(t,X)|_{X=X_t}\|^2 \lesssim \frac{\underset{X\sim q_t}{\mathbb{E}}\|
    \left(R^2\|X\|_2\right) \vee R^3\|^2}{(1-t)^8}=\frac{R^6 \vee (R^4\underset{X \sim q_t}{\mathbb{E}}\|X\|^2)}{(1-t)^8}\lesssim \frac{R^4(R^2 \vee d)}{(1-t)^8}$$

    By the relation of $\nabla_X v^*$ and $v^*$ with the conditional moments we have,$$\|\nabla_Xv^*(t,X)\|_{2,2}=\|\frac t{(1-t)^3}M_{2}^{c}-\frac1{1-t}\mathrm{I}_d\| \lesssim \frac{R^2}{(1-t)^3}\,.$$
As a result,s
we can bound the second term by $$\mathbb{E}\|\nabla_Xv^*(t,X_t)v^*(t,X_t)\|^2 \lesssim \frac{R^4}{(1-t)^6} \mathbb{E}\|v^*(t,X_t)\|^2 \lesssim\frac{R^4}{(1-t)^8}\mathbb{E}\|X\|_2^2 \vee R^2 \lesssim \frac{R^4(R^2 \vee d)}{(1-t)^8}\,.$$

Combining the above inequalities, we obtain $$\mathbb{E}\|\partial_t v^*(t,X_t)\|^2 \lesssim \frac{R^4(R^2 \vee d)}{(1-t)^8} + \frac{R^4(R^2 \vee d)}{(1-t)^8} \lesssim \frac{R^4(R^2 \vee d)}{(1-t)^8}$$

 If choosing union bound for early stopping parameter $\delta$, then for $t \in [0, 1-\delta]$, we know that  $\mathbb{E}\|\partial_t v^*(t,X_t)\|^2 \lesssim \frac{R^4(R^2 \vee d)}{\delta^8}$.
\end{proof}

\section{Analysis for Algorithm 1}
In this section, we provide the proof of \cref{thm:main_theorem_reverse_PFODE}. We first provide the bound for the predictor stage (\cref{sec:prediction step}) and corrector stage (\cref{sec:the proof for corrector stage}), respectively. Then, we combine these two parts and finish the proof.

\subsection{The Proof of Predictor Stage}
\label{sec:prediction step}
\begin{lemma}
\label{prediction step}
Assuming \cref{ass: Manifold assumption}, \ref{ass:score approxiamation assumption}, \ref{hatvlip} holds and $h \lesssim 1/L_{v,X}$. Then  for any distribution $q$, we have the following bound:
\begin{align*}
    W_2(qP_{flow}^{t_0,h}, q\hat{P}_{flow}^{t_0,h}) \lesssim h^2\frac{R^2\sqrt{R^2 \vee d}}{\delta^4} + h\varepsilon_{sc}.
\end{align*}

\end{lemma}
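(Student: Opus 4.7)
The plan is to use synchronous coupling between the continuous and discretized flows, apply Gr\"onwall's inequality, and decompose the velocity mismatch into a temporal/Lagrangian drift term (controlled by the Vector Perturbation Lemma) and a velocity approximation error (controlled by Assumption \ref{ass:score approxiamation assumption}).

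First, I would couple the two processes by drawing a common initial point $X_{t_0}=\hat X_{t_0}\sim q$ (so we are reasoning marginally along $q_{t_0}$, which is the case in which Lemma \ref{vperturbation} is phrased). For $t\in[t_0,t_0+h]$ let $e_t=\|X_t-\hat X_t\|$. Using $\mathrm{d}X_t=v^*(t,X_t)\,\mathrm{d}t$ and $\mathrm{d}\hat X_t=\hat v(t_0,\hat X_{t_0})\,\mathrm{d}t$, one gets
\begin{align*}
\partial_t e_t^2=2\langle X_t-\hat X_t,\;v^*(t,X_t)-\hat v(t_0,\hat X_{t_0})\rangle\le \tfrac{1}{h}e_t^2+h\,\|v^*(t,X_t)-\hat v(t_0,\hat X_{t_0})\|^2
\end{align*}
by the AM--GM inequality with weight $1/h$. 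Since $e_{t_0}=0$ and $h\lesssim 1/L_{v,X}$, Gr\"onwall's inequality yields $\mathbb E[e_{t_0+h}^2]\lesssim h\int_{t_0}^{t_0+h}\mathbb E\|v^*(t,X_t)-\hat v(t_0,X_{t_0})\|^2\,\mathrm{d}t$ with the coupling identity $\hat X_{t_0}=X_{t_0}$.

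Next I would split the velocity mismatch through the pivot $v^*(t_0,X_{t_0})$:
\begin{align*}
\|v^*(t,X_t)-\hat v(t_0,X_{t_0})\|^2\le 2\|v^*(t,X_t)-v^*(t_0,X_{t_0})\|^2+2\|v^*(t_0,X_{t_0})-\hat v(t_0,X_{t_0})\|^2.
\end{align*}
For the second piece Assumption \ref{ass:score approxiamation assumption} gives expectation at most $\varepsilon_{sc}^2$. For the first piece I would write it as a Lagrangian integral along the true flow,
\begin{align*}
v^*(t,X_t)-v^*(t_0,X_{t_0})=\int_{t_0}^t\big(\partial_s v^*(s,X_s)+\nabla_X v^*(s,X_s)\,v^*(s,X_s)\big)\,\mathrm{d}s,
\end{align*}
which is precisely the quantity denoted $\partial_s v^*(s,X_s)$ inside Lemma \ref{vperturbation}. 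Cauchy--Schwarz in time, followed by the Vector Perturbation Lemma applied marginally at $X_s\sim q_s$, yields $\mathbb E\|v^*(t,X_t)-v^*(t_0,X_{t_0})\|^2\lesssim h^2\,R^4(R^2\vee d)/\delta^8$.

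Combining the two pieces and integrating over the window of length $h$ gives
\begin{align*}
\mathbb E[e_{t_0+h}^2]\lesssim h^2\big(h^2\,R^4(R^2\vee d)/\delta^8+\varepsilon_{sc}^2\big),
\end{align*}
and taking square roots delivers the claimed $W_2$ bound $h^2 R^2\sqrt{R^2\vee d}/\delta^4+h\varepsilon_{sc}$ since $(X_{t_0+h},\hat X_{t_0+h})$ is a valid coupling of $qP_{flow}^{t_0,h}$ and $q\hat P_{flow}^{t_0,h}$. The main subtlety I expect is the bookkeeping that the Lagrangian derivative $\partial_s v^*(s,X_s)$ that appears inside the integral is exactly the object controlled by Lemma \ref{vperturbation} (rather than the purely temporal partial derivative), and that the marginal of $X_s$ under the coupling equals $q_s$ so that Lemma \ref{vperturbation} is directly applicable; once that is made explicit, the rest is a short Gr\"onwall and Cauchy--Schwarz calculation.
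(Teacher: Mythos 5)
Your argument is essentially identical to the paper's: same synchronous coupling with a common initial point, same AM--GM/Gr\"onwall step producing $\mathbb{E}[e_{t_0+h}^2]\lesssim h\int_{t_0}^{t_0+h}\mathbb{E}\|v^*(t,X_t)-\hat v(t_0,X_{t_0})\|^2\,\mathrm{d}t$, same decomposition through the pivot $v^*(t_0,X_{t_0})$ with the first piece controlled by Lemma~\ref{vperturbation} and the second by Assumption~\ref{ass:score approxiamation assumption}. In fact your write-up is a touch more careful than the paper's, since you correctly keep the discretized drift frozen at $\hat v(t_0,\hat X_{t_0})$ throughout the step and flag explicitly that what appears in the time integral is the Lagrangian derivative, i.e.\ exactly the object Lemma~\ref{vperturbation} bounds.
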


\begin{proof}
Suppose $X_{t_0} = \hat{X}_{t_0}\sim q $. By the definition of the predictor step, we have that  
\begin{align*}
\frac{\mathrm{d}X}{\mathrm{d}t} &= v^*(t,X_t) \\
\frac{\mathrm{d}\hat{X}}{\mathrm{d}t} &= \hat{v}(t,\hat{X}_t)\,.
\end{align*}

As a result, we can derive the difference by
\begin{align*}
\partial_t \|X_t - \hat{X}_t\|^2 &= 2 \langle X_t - \hat{X}_t, \frac{\mathrm{d}X}{\mathrm{d}t} -\frac{\mathrm{d}\hat{X}}{\mathrm{d}t} \rangle \\
&=  \langle X_t - \hat{X}_t, v^*(t,X_t)-\hat{v}(t,\hat{X}_t)\rangle \\
&\leq \frac{1}{h} \|X_t - \hat{X}_t\|^2 + h \|v^*(t,X_t)-\hat{v}(t,\hat{X}_t)\|^2.
\end{align*}

By Gr\"{o}nwall's inequality, we have that 

\begin{equation*}
\mathbb{E}[\|X_{t_0+h} - \hat{X}_{t_0+h}\|^2] \leq \exp( \frac{1}{h}\cdot h) \int_{t_0}^{t_0+h} h \mathbb{E}[\|v^*(t,X_t)-\hat{v}(t,\hat{X}_t)\|^2]  dt
\end{equation*}

\begin{equation}
\lesssim h \int_{t_0}^{t_0+h} \mathbb{E}[\|v^*(t,X_t)-\hat{v}(t,\hat{X}_t)\|^2]dt. \label{equ5}
\end{equation}

We can decopose the error in the way that

\begin{align*}
\mathbb{E}[\|v^*(t,X_t) - \hat{v}(t,\hat{X}_t)\|^2] &= \mathbb{E}[\|v^*(t,X_t) \underbrace{ - v^*(t_0,X_{t_0}) + v^*(t_0,\hat{X}_{t_0}) }_{=0} + \hat{v}(t,\hat{X}_t)\|^2] \\
&\lesssim \mathbb{E}[\|v^*(t,X_t) - v^*(t_0,X_{t_0})\|^2] + \mathbb{E}[\|v^*(t_0,\hat{X}_{t_0}) - \hat{v}(t_0,\hat{X}_{t_0})\|^2].
\end{align*}
Note that $X_{t_0} \sim \hat{X}_{t_0}$, $X_{t_0}$ and $\hat{X}_{t_0}$ has the same distribution.

By lemma \ref{vperturbation}, for $t \in [t_0,t_0+h]$ the first term is bounded by 
\begin{align*}
\mathbb{E}[\|v^*(t,X_t) - v^*(t_0,X_{t_0})\|^2]&= \mathbb{E}[\|\int_{t_0}^t \partial_sv^*(s,X_s)ds\|^2]\\
&\leq (t-t_0) \int_{t_0}^t \mathbb{E}[\|\partial_sv^*(s,X_s)\|^2]ds\\
&\lesssim(t-t_0)^2\frac{R^4(R^2 \vee d)}{\delta^8}\\
&\lesssim h^2 \frac{R^4(R^2 \vee d)}{\delta^8}
\end{align*}

By \cref{ass:score approxiamation assumption}, we know that the second term is bounded 
\begin{equation*}
\mathbb{E}[\|v^*(t_0,\hat{X}_{t_0}) - \hat{v}(t_0,\hat{X}_{t_0})\|^2]  \leq \varepsilon^2_{sc}  
\end{equation*}

Combining above results and using \cref{equ5}, we have that
\begin{equation}\label{running one step}
    \mathbb{E}[\|X_{t_0+h} - \hat{X}_{t_0+h}\|^2] \lesssim  h^4 \frac{R^4(R^2 \vee d)}{\delta^8} + h^2 \varepsilon^2_{sc}\,.
\end{equation}

So we have $$W_2(q\hat{P}_{flow}^{t_0,h},qP_{flow}^{t_0,h})\leq \sqrt{\mathbb{E}[\|X_{t_0+h} - \hat{X}_{t_0+h}\|^2]} \lesssim  h^2 \frac{R^2(R \vee \sqrt{d})}{\delta^4} + h \varepsilon_{sc}\,.$$
\end{proof}

\begin{lemma}
\label{lemma8}
Assuming \cref{ass: Manifold assumption}, \ref{ass:score approxiamation assumption}, \ref{hatvlip} holds and \(h \lesssim 1/L_{v,X}\). Then  for any two distribution $p_1$ and $p_2$, after both running a h time step using discretized velocity at any time $t_0$, we have the following guarantee:
$$W_2^2(p_1P_{flow}^{t_0,h},p_2P_{flow}^{t_0,h}) \leq e^{O(L_{v,X}h)}W_2(p_1,p_2)$$
\end{lemma}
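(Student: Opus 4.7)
The statement is a standard Grönwall-type contraction/expansion estimate for the flow map $P_{flow}^{t_0,h}$; my plan is to establish it by a synchronous coupling argument using the Lipschitz regularity of the velocity field (which, by Assumption~\ref{hatvlip} or the Lipschitz bound $L_{v,X}=R^2/\delta^3$ on $v^*$, applies to the field driving this flow). Concretely, let $(X_{t_0},Y_{t_0})$ be an optimal $W_2$-coupling of $p_1$ and $p_2$, so that $\mathbb{E}\|X_{t_0}-Y_{t_0}\|^2 = W_2^2(p_1,p_2)$. Propagate both from time $t_0$ to $t_0+h$ under the same velocity field (the one defining $P_{flow}^{t_0,h}$), producing a coupling of $p_1 P_{flow}^{t_0,h}$ and $p_2 P_{flow}^{t_0,h}$ whose expected squared distance upper-bounds $W_2^2$.

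\textbf{Key steps.} First, differentiate the squared distance along the two coupled trajectories: using the ODE $\mathrm{d}X_t/\mathrm{d}t = v(t_0,X_{t_0})$ (held fixed at the left endpoint, exactly as in the definition of $P_{flow}^{t_0,h}$), one obtains
\begin{align*}
\partial_t \|X_t-Y_t\|^2 \;=\; 2\langle X_t - Y_t,\; v(t_0,X_{t_0}) - v(t_0,Y_{t_0})\rangle.
\end{align*}
Next, apply Cauchy--Schwarz and the $L_{v,X}$-Lipschitz bound on $v(t_0,\cdot)$, together with the elementary inequality $2ab \le a^2 + b^2$ (or Young's inequality), to get
\begin{align*}
\partial_t \|X_t - Y_t\|^2 \;\le\; \|X_t-Y_t\|^2 + L_{v,X}^2 \|X_{t_0}-Y_{t_0}\|^2 .
\end{align*}
Then apply Grönwall's inequality on $[t_0,t_0+h]$ and integrate, which yields $\|X_{t_0+h}-Y_{t_0+h}\|^2 \le e^{O(L_{v,X}h)} \|X_{t_0}-Y_{t_0}\|^2$. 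Taking expectations over the optimal coupling and using the definition of $W_2$ gives the claimed bound.

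\textbf{Expected obstacle.} The argument is essentially routine; the only subtlety is making sure to use the correct driving field (the one appearing in $P_{flow}^{t_0,h}$), and to keep track of the fact that, because the velocity is frozen at the left endpoint $t_0$ in the discretized flow, the Lipschitz estimate only needs to be invoked at the single point $(t_0,X_{t_0})$ versus $(t_0,Y_{t_0})$, so no time-regularity of $v$ is required. For the continuous-flow variant the argument is identical but uses $v(t,X_t) - v(t,Y_t)$ inside the inner product, still bounded by $L_{v,X}$-Lipschitzness. Under the stepsize assumption $h \lesssim 1/L_{v,X}$, the constant $e^{O(L_{v,X}h)}$ is of order $1$, which is what makes this lemma useful for propagating the one-step predictor error bound of Lemma~\ref{prediction step} across the whole predictor stage.
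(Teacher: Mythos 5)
Your overall strategy---synchronous coupling of the two Euler step trajectories plus Gr\"{o}nwall---is exactly what the paper does, and freezing the velocity at the left endpoint is also correct (it matches how the paper's proof actually treats $P_{\mathrm{flow}}$). However, there is a concrete flaw in your constant bookkeeping that would break the result when it is iterated.

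You apply the \emph{unweighted} inequality $2ab\le a^2+b^2$, obtaining
\begin{align*}
\partial_t\|X_t-Y_t\|^2\;\le\;\|X_t-Y_t\|^2+L_{v,X}^2\|X_{t_0}-Y_{t_0}\|^2.
\end{align*}
Gr\"{o}nwall with rate $1$ on the homogeneous part then gives the per-step factor
\begin{align*}
\|X_{t_0+h}-Y_{t_0+h}\|^2\;\le\; e^{h}\bigl(1+hL_{v,X}^2\bigr)\,\|X_{t_0}-Y_{t_0}\|^2,
\end{align*}
which is \emph{not} $e^{O(L_{v,X}h)}\|X_{t_0}-Y_{t_0}\|^2$. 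Under $h\lesssim 1/L_{v,X}$, the prefactor $e^{h}(1+hL_{v,X}^2)$ is of order $1+L_{v,X}$, not $O(1)$, because $hL_{v,X}^2$ is only bounded by $L_{v,X}$. Worse, when this per-step factor is iterated across the full predictor window $T_{\mathrm{pred}}=1/L_{v,X}$ in Lemma~\ref{lem:predict-n-step}, it accumulates to $(1+hL_{v,X}^2)^{T_{\mathrm{pred}}/h}\approx e^{L_{v,X}^2T_{\mathrm{pred}}}=e^{L_{v,X}}$, i.e.\ precisely the exponential blow-up the predictor--corrector framework is designed to avoid.

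The fix is the weighted Young inequality $2\langle a,b\rangle\le L_{v,X}\|a\|^2+\tfrac{1}{L_{v,X}}\|b\|^2$. Combined with the Lipschitz bound $\|v(t_0,X_{t_0})-v(t_0,Y_{t_0})\|\le L_{v,X}\|X_{t_0}-Y_{t_0}\|$, this balances both rates to $L_{v,X}$,
\begin{align*}
\partial_t\|X_t-Y_t\|^2\;\le\; L_{v,X}\|X_t-Y_t\|^2 + L_{v,X}\|X_{t_0}-Y_{t_0}\|^2,
\end{align*}
so Gr\"{o}nwall yields the per-step factor $e^{L_{v,X}h}(1+L_{v,X}h)\le e^{O(L_{v,X}h)}$, which telescopes over the predictor window to $e^{O(L_{v,X}T_{\mathrm{pred}})}=O(1)$. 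This is exactly the split the paper uses. Your remaining observations (optimal coupling at $t_0$, no time-regularity needed for the frozen-velocity step, and the role of $h\lesssim 1/L_{v,X}$ in making the per-step expansion $O(1)$) are correct once the Young inequality is weighted properly.
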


\begin{proof}
Suppose $X_{t_0} \sim p_1,\hat{X}_{t_0}\sim p_2 $, and the couple $\gamma \sim (p_1,p_2)$ makes $W_2(p_1,p_2)=\underset{(x_{t_0},\hat{x}_{t_0})\sim \gamma}{\mathbb{E}}[\|x_{t_0}-\hat{x}_{t_0}\|^2]^{\frac{1}{2}}$. Then, we have that
\begin{align*}
\frac{\mathrm{d}X}{\mathrm{d}t} &= v^*(t_0,X_{t_0}) \\
\frac{\mathrm{d}\hat{X}}{\mathrm{d}t} &= v^*(t_0,\hat{X}_{t_0})\,,
\end{align*}
which indicates
\begin{align*}
\partial_t \|X_t - \hat{X}_t\|^2 &= 2 \langle X_t - \hat{X}_t, \dot{X}_t - \dot{\hat{X}}_t \rangle \\
&=   2\langle X_t - \hat{X}_t, v^*(t_0,x_{t_0})-v^*(t_0,\hat{X}_{t_0})\rangle \\
&\leq L_{v,X} \|X_t - \hat{X}_t\|^2 + \frac{1}{L_{v,X}}\|v^*(t_0,X_{t_0})-v^*(t_0,\hat{X}_{t_0})\|^2\\
&\leq L_{v,X} \|X_t - \hat{X}_t\|^2 +\frac{1}{L_{v,X}}\cdot L_{v,X}^2\|X_{t_0}-\hat{X}_{t_0}\|^2\,.
\end{align*}

By Gr\"{o}nwall's inequality,

\begin{align*}
W_2^2(p_1P_{flow}^{t_0,h},p_2P_{flow}^{t_0,h})&\leq \mathbb{E}[\|X_{t_0+h}-\hat{X}_{t_0+h}\|^2]\\
&\lesssim \mathbb{E}[e^{L_{v,X}h}(\|X_{t_0}-\hat{X}_{t_0}\|^2+L_{v,X}h\|X_{t_0}-\hat{X}_{t_0}\|^2)]\\
&\lesssim e^{L_{v,X}h}\mathbb{E}[\|X_{t_0}-\hat{X}_{t_0}\|^2]\\
&=e^{L_{v,X}h}W_2^2(p_1,p_2)
\end{align*} 

So we have $$W_2^2(p_1P_{flow}^{t_0,h},p_2P_{flow}^{t_0,h}) \leq \exp(O(L_{v,X}h))W_2(p_1,p_2)$$
\end{proof}

\begin{lemma}
\label{lem:predict-n-step}
Assuming \cref{ass: Manifold assumption}, \ref{ass:score approxiamation assumption}, \ref{hatvlip} holds. Suppose $h_1,h_2,\cdots,h_{N_0}>0$, $t_N = h_1 +h_2+\cdots+h_{N_0}\lesssim L_{v,X}$, denote $h_{max} = \underset{i \in N_0}{\max{h_i}}$, we have $W_2(qP_{flow}^{t_0,h_1,\cdots,h_{N_0}},q\hat{P}_{flow}^{t_0,h_1,\cdots,h_{N_0}})\lesssim \sum_{i=1}^{N_0}h_i^2 \frac{R^2(R \vee \sqrt{d})}{\delta^4} + h_i \varepsilon_{sc}$
\end{lemma}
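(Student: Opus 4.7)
The plan is a telescoping argument across the $N_0$ sub-steps, combining the one-step bound from Lemma \ref{prediction step} with the Wasserstein stability of the continuous true flow from Lemma \ref{lemma8}. Let $s_i := t_0 + \sum_{j<i} h_j$ denote the grid points, and for brevity write $\Phi_i := P_{flow}^{s_i, h_i}$ for the true continuous flow on the $i$-th subinterval and $\hat{\Phi}_i := \hat{P}_{flow}^{s_i, h_i}$ for the one-step discretization with the learned velocity. The target quantity is $W_2(q\Phi_1\cdots\Phi_{N_0}, q\hat{\Phi}_1\cdots\hat{\Phi}_{N_0})$.

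First, I would introduce the $N_0$ hybrid measures obtained by swapping one operator at a time, i.e., $\mu_i := q\hat{\Phi}_1\cdots\hat{\Phi}_i\Phi_{i+1}\cdots\Phi_{N_0}$, with $\mu_0$ corresponding to the fully continuous-flow composition and $\mu_{N_0}$ to the fully discretized one. By the triangle inequality for $W_2$, the total error is bounded by $\sum_{i=1}^{N_0} W_2(\mu_{i-1}, \mu_i)$. For each summand, the two measures agree on the prefix $q\hat{\Phi}_1\cdots\hat{\Phi}_{i-1}$, differ only in whether step $i$ is $\Phi_i$ or $\hat{\Phi}_i$, and then share the continuous-flow suffix $\Phi_{i+1}\cdots\Phi_{N_0}$.

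Second, I would apply Lemma \ref{prediction step} to the common input distribution $q\hat{\Phi}_1\cdots\hat{\Phi}_{i-1}$ (the lemma is stated for arbitrary $q$), which bounds the immediate post-step-$i$ discrepancy by $\lesssim h_i^2 \frac{R^2(R\vee\sqrt{d})}{\delta^4} + h_i \varepsilon_{sc}$. I would then propagate this discrepancy through the remaining true-flow operators $\Phi_{i+1}, \ldots, \Phi_{N_0}$ by iterated application of Lemma \ref{lemma8}, producing a multiplicative factor of $\prod_{j>i}\exp\bigl(O(L_{v,X}h_j)\bigr) = \exp\bigl(O(L_{v,X}\sum_{j>i}h_j)\bigr)$. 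Using the hypothesis that the total elapsed time is $\lesssim 1/L_{v,X}$, this exponential is uniformly $O(1)$, and summing the telescoped terms yields the claimed bound $\lesssim \sum_{i=1}^{N_0} h_i^2 \frac{R^2(R\vee\sqrt{d})}{\delta^4} + h_i\varepsilon_{sc}$.

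The only delicate point is a bookkeeping issue rather than a real obstacle: Lemma \ref{lemma8} is stated for a single continuous-flow step applied to both arguments, so to propagate the error through the remaining suffix I need to invoke it once per remaining subinterval and multiply the exponentials, which is legitimate because $W_2$ stability composes. I would also verify that Lemma \ref{prediction step}'s stepsize hypothesis $h \lesssim 1/L_{v,X}$ is satisfied at every step, which follows from $h_i \leq t_N \lesssim 1/L_{v,X}$. No additional smoothness beyond what Assumptions \ref{ass: Manifold assumption}, \ref{ass:score approxiamation assumption}, \ref{hatvlip} already provide is required, so the argument is essentially mechanical once the telescope is set up correctly.
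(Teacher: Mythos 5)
Your hybrid/telescoping argument is essentially the same as the paper's recursive proof: both introduce one intermediate measure per step, bound each one-step gap via Lemma~\ref{prediction step}, and propagate the gap through the remaining steps via Lemma~\ref{lemma8} to accumulate an $\exp\bigl(O(L_{v,X}\sum_i h_i)\bigr)=O(1)$ factor. The only bookkeeping difference is that you interpolate from the front (discretized prefix, continuous suffix) whereas the paper peels off the last step recursively (continuous prefix, discretized suffix); consequently you propagate through true-flow operators while the paper propagates through the discretized ones, and either variant of Lemma~\ref{lemma8} suffices because the relevant Lipschitz constant $L_{v,X}$ is the same in both cases by Assumption~\ref{hatvlip}.
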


\begin{proof}
\begin{align*} &W_2(qP_{flow}^{t_0,h_1,\cdots,h_{N_0}},q\hat{P}_{flow}^{t_0,h_1,\cdots,h_{N_0}})\\&
\leq W_2(qP_{flow}^{t_0,h_1,\cdots,h_N},qP_{flow}^{t_0,h_1,\cdots,h_{N_0-1}}\hat{P}_{flow}^{h_{N_0}}) + W_2(qP_{flow}^{t_0,h_1,\cdots,h_{N_0-1}}\hat{P}_{flow}^{h_{N_0}},q\hat{P}_{flow}^{t_0,h_1,\cdots,h_N})\\
&\leq O(h_{N_0}^2 \frac{R^2(R \vee \sqrt{d})}{\delta^4} + h_{N_0} \varepsilon_{sc}) + \exp(O(L_{v,X}h_{N_0}))W_2(qP_{flow}^{t_0,h_1,\cdots,h_{N_0-1}},q\hat{P}_{flow}^{t_0,h_1,\cdots,h_{N_0-1}})\,,
\end{align*} 
where the first term is by lemma \ref{prediction step} and the second term is by lemma \ref{lemma8}. Then by recursion, we have 
\begin{align*}
&W_2(qP_{flow}^{t_0,h_1,\cdots,h_{N_0}},q\hat{P}_{flow}^{t_0,h_1,\cdots,h_{N_0}})\\&\leq ((\sum_{i=1}^{N_0}h_i^2 \frac{R^2(R \vee \sqrt{d})}{\delta^4} + h_i \varepsilon_{sc}))\cdot \exp(O(L_{v,X}(h_1+h_2+\cdots+h_{N_0})))\\
&\lesssim \sum_{i=1}^{N_0}h_i^2 \frac{R^2(R \vee \sqrt{d})}{\delta^4} + h_i \varepsilon_{sc}\,.
\end{align*}
\end{proof}

\subsection{The Proof for Correct Stage}\label{sec:the proof for corrector stage}
In this section, we use the underdamped Langevin corrector to inject noise to convert the $W_2$ distance to the TV distance so that we can guarantee that the data processing inequality is preserved. This part is a comprehensive explanation of \cref{subsec:introduce of ULC} and uses the same notation.

We denote the prediction time as $T_{\mathrm{pred}}$ and $N_{\mathrm{pred}}$ as the whole steps during $T_{\mathrm{pred}}$. Denote $q_{t_0}=law(X_{t_0}).\quad X_{t_0}=(1-t_0)Z+t_0X_1$. From \cref{sec:prediction step}, starting at the same distribution $q_{t_0}$, we run the continuous flow process and discretized process for $T_{\mathrm{pred}}$, we get respectively $q:=q_{t_0}P_{flow}^{t_0,N_{\mathrm{pred}}},p:=q_{t_0}\hat{P}_{flow}^{t_0,N_{\mathrm{pred}}}$.

To align with the notation \cref{subsec:introduce of ULC}, we define by probability measure $\boldsymbol{q}:=q \otimes \gamma^d$,and $\boldsymbol{p}:=p \otimes \gamma^d$  where $\gamma^d=\mathcal{N}(0,I)$. Throughout this part, we set the friction parameter of ULD to $\rho  \asymp \sqrt{L_{S,X}}$ with $L_{S,X}=R^2/\delta^4$.

We want to bound the TV distance of  $\boldsymbol{p}P_{\mathrm{ULMC}}^{t_0+t_{\mathrm{pred},N_{\mathrm{corr}}}}$ and $\boldsymbol{q}$, where $P_{\mathrm{ULMC}}^{t_0+t_{\mathrm{pred},N_{\mathrm{corr}}}}$ is the transition kernel of running \cref{eq:ULMC}. Using triangle inequality for TV distance, we can decompose $\operatorname{TV}(\boldsymbol{p}P_{\mathrm{ULMC}}^{t_0+t_{\mathrm{pred},N_{\mathrm{corr}}}},\boldsymbol{q})$ into $\operatorname{TV}(\boldsymbol{p}P_{\mathrm{ULMC}}^{t_0+t_{\mathrm{pred},N_{\mathrm{corr}}}},\boldsymbol{p}P_{\mathrm{ULD}}^{t_0+t_{\mathrm{pred},N_{\mathrm{corr}}}})$ and $\operatorname{TV}(\boldsymbol{p}P_{\mathrm{ULD}}^{t_0+t_{\mathrm{pred},N_{\mathrm{corr}}}},\boldsymbol{q})$, where $P_{\mathrm{ULD}}^{t_0+t_{\mathrm{pred},N_{\mathrm{corr}}}}$ is the transition kernel of running \cref{eq:ULD}

The following lemma comes from Lemma 9 of \cite{chen2023probability} and can be used to bound $\operatorname{TV}(\boldsymbol{p}P_{ULD}^{t_0+t_{\mathrm{pred},N_{\mathrm{corr}}}},\boldsymbol{q})$.

\begin{lemma}
\label{lemma10}
Assuming \cref{ass: Manifold assumption}, \ref{ass:score approxiamation assumption}, \ref{hatvlip} holds. If $T_{\mathrm{corr}} \lesssim 1/\sqrt{L_{S,X}}$, then $$\operatorname{TV} \left( pP_{ULD}^{t_0+T_{\mathrm{pred}},N_{\mathrm{corr}}}, q \right) \lesssim \sqrt{\operatorname{KL} \left( pP_{ULD}^{t_0+T_{\mathrm{pred}},N_{\mathrm{corr}}} \| q \right)} \lesssim \frac{W_2(p,q)}{L_{S,X}^{1/4} T_{\mathrm{corr}}^{3/2}}.$$
\end{lemma}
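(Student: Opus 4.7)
The plan is to prove a KL bound and then apply Pinsker's inequality for the TV bound. The first inequality in the statement, $\operatorname{TV}(\mu,\nu)\leq\sqrt{\operatorname{KL}(\mu\|\nu)/2}$, is immediate from Pinsker. Moreover, since the ULD with potential $-\log q$ has $q\otimes\gamma^d$ as its stationary distribution, lifting both endpoints to the joint position-velocity space and applying the data processing inequality lets us replace the position-marginal KL by the joint KL between $(p\otimes\gamma^d)P_{\mathrm{ULD}}^{T_{\mathrm{corr}}}$ and $q\otimes\gamma^d$. Because $q\otimes\gamma^d$ is invariant, this joint KL also equals the KL between $(p\otimes\gamma^d)P_{\mathrm{ULD}}^{T_{\mathrm{corr}}}$ and $(q\otimes\gamma^d)P_{\mathrm{ULD}}^{T_{\mathrm{corr}}}$, which is the right object for a coupling-based comparison.

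The core step is bounding this joint KL by $O(W_2^2(p,q)/(L_{S,X}^{1/2}T_{\mathrm{corr}}^3))$. I would follow the short-time regularization strategy of \citet{chen2023probability}. Starting from the $W_2$-optimal coupling $(X_0,X_0')$ of $p$ and $q$ and a common initial velocity $V_0\sim\gamma^d$, I would construct an auxiliary process that interpolates the initial position gap along a smooth deterministic path in phase space that vanishes by time $T_{\mathrm{corr}}$. Comparing the true ULD started from $p\otimes\gamma^d$ with this shifted process, Girsanov's theorem expresses the Radon--Nikodym derivative as an exponential martingale whose quadratic variation is a time integral of the squared drift mismatch. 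Using the $L_{S,X}$-Lipschitzness of $-\log q$ together with a cubic-in-time choice of the interpolation path then produces a KL bound of the claimed order, after which taking the square root and Pinsker gives the TV conclusion.

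The main obstacle is exactly this hypoelliptic regularization estimate. The characteristic $T_{\mathrm{corr}}^{-3}$ rate, in contrast to the overdamped $T^{-1}$, reflects that Brownian noise enters only the velocity equation of the ULD, so its smoothing of the position marginal appears only at third order in time; tracking this through the Girsanov calculus is where the real work lies. The smallness hypothesis $T_{\mathrm{corr}}\lesssim 1/\sqrt{L_{S,X}}$ is precisely the regime in which the score-driven drift has not yet swamped this third-order smoothing, so that the squared drift mismatch stays controlled by the interpolation derivative. The square root of the $T_{\mathrm{corr}}^{-3}L_{S,X}^{-1/2}W_2^2(p,q)$ bound then matches the stated denominator $L_{S,X}^{1/4}T_{\mathrm{corr}}^{3/2}$.
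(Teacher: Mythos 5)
Your proposal is correct and matches the argument underlying the paper's cited source: the paper itself gives no proof of this lemma, instead invoking Lemma~9 of \citet{chen2023probability}, and that lemma is established by exactly the Pinsker-plus-Girsanov short-time (hypoelliptic) regularization argument you describe, including the cubic-in-time interpolation in phase space, the role of $T_{\mathrm{corr}}\lesssim 1/\sqrt{L_{S,X}}$ in controlling the score-mismatch contribution, and the resulting $W_2/(L_{S,X}^{1/4}T_{\mathrm{corr}}^{3/2})$ rate.
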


Next, we need to bound $\operatorname{TV}(\boldsymbol{p}P_{ULMC}^{t_0+t_{\mathrm{pred},N_{\mathrm{corr}}}},\boldsymbol{p}P_{ULD}^{t_0+t_{\mathrm{pred},N_{\mathrm{corr}}}})$, which corresponds just to the discretized error. Recall we just use the score function at $t_0+T_{\mathrm{pred}}$ here, we introduce two process.   
\begin{align*}
\mathrm{d}z_m^\circ &= v_m^\circ \, \mathrm{d}m, \\
\mathrm{d}v_m^\circ &= -\rho v_m^\circ \, \mathrm{d}m - \nabla U_{t_0+T_{\text{pred}}} (z_m^\circ) \, \mathrm{d}m + \sqrt{2} \rho \, \mathrm{d}B_m, \quad (z_0^\circ, v_0^\circ) \sim \boldsymbol{q}, \\
\mathrm{d}z_m &= v_m \, \mathrm{d}m, \\
\mathrm{d}v_m &= -\rho v_m \, \mathrm{d}m - \nabla U_{t_0+T_{\mathrm{pred}}} (z_m) \, \mathrm{d}m + \sqrt{2} \rho \, \mathrm{d}B_m, \quad (z_0, v_0) \sim \boldsymbol{p}.
\end{align*}

Since $\boldsymbol{q}$ is already stationaty distribution, we know for ant integer k, $$\left( z^\circ_{nh_{\mathrm{corr}}}, v^\circ_{nh_{\mathrm{corr}}} \right) 
\sim \boldsymbol{q} P^{k}_{ULD} = \boldsymbol{q}, \quad 
\left( z_{kh_{\mathrm{corr}}}, v_{kh_{\mathrm{corr}}} \right) 
\sim \boldsymbol{p} P^{k}_{ULD}.$$

We use the following lemma form Lemma10 of \cite{chen2023probability} to bound the $W_2$ distance between the two process.

\begin{lemma}
\label{lemma11}
 Assuming \cref{ass: Manifold assumption}, \ref{ass:score approxiamation assumption}, \ref{hatvlip} holds.   If $T_{\mathrm{corr}}\lesssim1/\sqrt{L_{S,X}}$, then for $m\leq T_{\mathrm{corr}}$, we have $$\mathbb{E}[\|z_m^\circ-z_m\|^2]\leq W_2^2(p,q)$$.
\end{lemma}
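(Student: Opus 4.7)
The plan is to bound the distance between the two ULD trajectories via a \emph{synchronous coupling} combined with a weighted quadratic Lyapunov function, exploiting the short horizon $T_{\mathrm{corr}}\lesssim 1/\sqrt{L_{S,X}}$ together with the critical-damping choice $\rho\asymp\sqrt{L_{S,X}}$. First I set up the initial coupling: since $\boldsymbol{p}=p\otimes\gamma^d$ and $\boldsymbol{q}=q\otimes\gamma^d$ share the same Gaussian momentum marginal, I draw $(z_0^\circ,z_0)$ from the optimal $W_2$ coupling of $(q,p)$ and take $v_0^\circ=v_0$ sampled independently from $\gamma^d$. This gives $\mathbb{E}\|z_0^\circ-z_0\|^2=W_2^2(q,p)$ and $\mathbb{E}\|v_0^\circ-v_0\|^2=0$. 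Driving both SDEs with the same Brownian motion eliminates the stochastic term in the difference $\Delta z_m:=z_m^\circ-z_m$, $\Delta v_m:=v_m^\circ-v_m$, which then obeys the deterministic ODE $\dot{\Delta z}_m=\Delta v_m$, $\dot{\Delta v}_m=-\rho\,\Delta v_m-(\nabla U_{t_0+T_{\mathrm{pred}}}(z_m^\circ)-\nabla U_{t_0+T_{\mathrm{pred}}}(z_m))$.

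Next I introduce the Lyapunov function $\Phi_m=L_{S,X}\|\Delta z_m\|^2+\|\Delta v_m\|^2$. Differentiating along the coupled ODE and using $\|\nabla U(z_m^\circ)-\nabla U(z_m)\|\le L_{S,X}\|\Delta z_m\|$, I split the two cross terms $2L_{S,X}\langle\Delta z_m,\Delta v_m\rangle$ and $-2\langle\Delta v_m,\nabla U(z_m^\circ)-\nabla U(z_m)\rangle$ via weighted Young's inequalities at scale $\sqrt{L_{S,X}}$. The friction $-2\rho\|\Delta v_m\|^2$ with $\rho\asymp\sqrt{L_{S,X}}$ absorbs the resulting $\sqrt{L_{S,X}}\|\Delta v_m\|^2$ contributions, leaving a differential inequality of the shape $\dot\Phi_m\lesssim\sqrt{L_{S,X}}\,\Phi_m$. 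A Grönwall estimate on $[0,T_{\mathrm{corr}}]$, using $\sqrt{L_{S,X}}\,T_{\mathrm{corr}}\lesssim 1$, then yields $\Phi_m\lesssim\Phi_0=L_{S,X}\cdot W_2^2(q,p)$, and reading off the first coordinate gives $\mathbb{E}\|\Delta z_m\|^2\le\Phi_m/L_{S,X}\lesssim W_2^2(p,q)$, which is the claimed bound after absorbing the constant.

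The main obstacle is the Lyapunov-weight and damping balancing. Tracking the naive $\|\Delta z\|^2+\|\Delta v\|^2$ would force the Lipschitz contribution to enter with a factor $L_{S,X}$, and over a horizon of order $1/\sqrt{L_{S,X}}$ Grönwall would still blow up like $\exp(\sqrt{L_{S,X}})$ rather than staying $O(1)$. Weighting the position coordinate by $L_{S,X}$ and pairing it with the critical-damping scaling $\rho\asymp\sqrt{L_{S,X}}$ is precisely what brings the effective growth rate down to $\sqrt{L_{S,X}}$, so that the short-time assumption $T_{\mathrm{corr}}\lesssim 1/\sqrt{L_{S,X}}$ is exactly enough to close the bound with an $O(1)$ multiplicative constant; choosing the Young-inequality scale any differently either loses the absorption by $\rho$ or over-amplifies the position term.
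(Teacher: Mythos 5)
The paper does not prove this lemma itself; it imports it verbatim from Lemma 10 of \citet{chen2023probability}. Your synchronous-coupling argument with the weighted Lyapunov function $L_{S,X}\|\Delta z_m\|^2+\|\Delta v_m\|^2$ is exactly the standard short-time growth estimate underlying that citation, and it is correct, up to replacing the stated $\leq$ with the $\lesssim$ your Gr\"onwall step actually yields (which is evidently the intended reading, since the downstream bound in Lemma~\ref{lem:ULMCdiserro} only uses it up to a constant).
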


Finally, we can we use the folloing lemma to bound the discretization error.

\begin{lemma}
\label{lem:ULMCdiserro}
Assuming \cref{ass: Manifold assumption}, \ref{ass:score approxiamation assumption}, \ref{hatvlip} holds. If $T_{\mathrm{corr}}\leq1/\sqrt{L_{S,X}}$, then we have that 
\begin{equation*}
\operatorname{TV} \left( p\hat{P}_{\mathrm{ULMC}}^{t_0+T_{\mathrm{pred}},N_{\mathrm{corr}}}, pP_{\mathrm{ULD}}^{t_0+T_{\mathrm{pred}},N_{\mathrm{corr}}} \right) 
\lesssim \sqrt{\operatorname{KL} \left( pP_{\mathrm{ULMC}}^{t_0+T_{\mathrm{pred}},N_{\mathrm{corr}}} \| p\hat{P}_{\mathrm{ULD}}^{t_0+T_{\mathrm{pred}},N_{\mathrm{corr}}}  \right)} 
\end{equation*}
\begin{equation*}
\lesssim L_{S,X}^{3/4}T_{\mathrm{corr}}^{1/2}W_2(p,q)+   L_{S,X}^{3/4}T_{\mathrm{corr}}^{1/2}\sqrt{d} h_{\mathrm{corr}} + \frac{T_{\mathrm{corr}}^{1/2}\varepsilon_{sc}}{L_{S,X}^{1/4}\delta} .
\end{equation*}
\end{lemma}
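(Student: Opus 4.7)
The plan is to follow the standard Girsanov--Pinsker recipe for bounding the distance between a discretized ULD and its continuous counterpart, adapted to the rectified flow setting via the score-vs-vector conversion formula \eqref{eq:transfer between score and vector}. First, I would apply Pinsker's inequality to reduce $\operatorname{TV}(p\hat{P}_{\mathrm{ULMC}}, pP_{\mathrm{ULD}})$ to the square root of the KL divergence, and then couple the two path measures through the same initial draw and the same driving Brownian motion. Since ULMC and ULD differ only in the drift of the velocity component and share the diffusion coefficient $\sqrt{2\rho}$, Girsanov's theorem yields
\begin{equation*}
\operatorname{KL}\bigl(p P_{\mathrm{ULD}}^{t_0+T_{\mathrm{pred}},N_{\mathrm{corr}}} \,\|\, p\hat{P}_{\mathrm{ULMC}}^{t_0+T_{\mathrm{pred}},N_{\mathrm{corr}}}\bigr)
\;\lesssim\; \frac{1}{\rho}\int_0^{T_{\mathrm{corr}}} \mathbb{E}\bigl\| \nabla U_{t_0+T_{\mathrm{pred}}}(z_m) - s_{t_0+T_{\mathrm{pred}}}(z_{\lfloor m/h_{\mathrm{corr}}\rfloor h_{\mathrm{corr}}}) \bigr\|^2 \, \mathrm{d}m .
\end{equation*}

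Second, I would split the integrand by triangle inequality into a discretization piece and a score-approximation piece:
\begin{equation*}
\bigl\|\nabla U_t(z_m) - s_t(z_{kh_{\mathrm{corr}}})\bigr\|^2 \;\lesssim\; L_{S,X}^2 \, \|z_m - z_{kh_{\mathrm{corr}}}\|^2 \;+\; \bigl\|\nabla \log q_t(z_{kh_{\mathrm{corr}}}) + s_t(z_{kh_{\mathrm{corr}}})\bigr\|^2 ,
\end{equation*}
using that $\nabla U_t = -\nabla \log q_t$ is $L_{S,X}$-Lipschitz. For the displacement $\|z_m - z_{kh_{\mathrm{corr}}}\|^2 = \|\int_{kh_{\mathrm{corr}}}^m v_s\,\mathrm{d}s\|^2$, I would use the coupling from Lemma \ref{lemma11}: writing $z_s = z_s^{\circ} + (z_s - z_s^{\circ})$, the first part is the stationary ULD whose kinetic energy is $\mathbb{E}\|v_s^{\circ}\|^2 = d$ (so its movement contributes $h_{\mathrm{corr}}^2 d$), while the second part is controlled uniformly by $W_2^2(p,q)$. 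For the score-approximation piece, I would apply the identity $s_t - \nabla \log q_t = \tfrac{t}{1-t}(\hat v - v^\ast)$ together with $1-t \ge \delta$ and Assumption \ref{ass:score approxiamation assumption} to obtain an $\varepsilon_{sc}^2/\delta^2$ bound; evaluating this under the ULD marginal rather than $q_t$ introduces at most an additive $W_2(p,q)$-type correction (absorbed into the first term).

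Third, I would integrate over $m \in [0, T_{\mathrm{corr}}]$ to turn the pointwise bounds into the three additive contributions $L_{S,X}^2 T_{\mathrm{corr}}(W_2^2(p,q) + h_{\mathrm{corr}}^2 d)$ and $T_{\mathrm{corr}} \varepsilon_{sc}^2/\delta^2$, divide by $\rho \asymp \sqrt{L_{S,X}}$, and take the square root (Pinsker). Plugging $\rho = \sqrt{L_{S,X}}$ gives $\sqrt{L_{S,X}^{3/2}T_{\mathrm{corr}}}$ in front of the $W_2$ and $\sqrt{d}\,h_{\mathrm{corr}}$ terms and $T_{\mathrm{corr}}^{1/2}/(L_{S,X}^{1/4}\delta)$ in front of the $\varepsilon_{sc}$ term, matching the target bound.

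The main obstacle I foresee is the careful handling of the displacement bound for ULD started from the non-stationary $\boldsymbol{p}$: bounding $\mathbb{E}\|z_m - z_{kh_{\mathrm{corr}}}\|^2$ requires controlling the kinetic energy along the path uniformly, and one must invoke Lemma \ref{lemma11} (which compares to the stationary process) rather than relying solely on stationary kinetic-energy estimates. A secondary subtlety is that Assumption \ref{ass:score approxiamation assumption} controls the vector-field error only in expectation under $q_t$, so converting it into an expectation under the law of $z_{kh_{\mathrm{corr}}}$ requires either a change of measure or again a coupling argument; as long as $T_{\mathrm{corr}} \lesssim 1/\sqrt{L_{S,X}}$, the discrepancy between these two marginals is absorbed into the $W_2(p,q)$ term already present in the bound.
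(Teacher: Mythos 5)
Your proposal is correct and follows essentially the same route as the paper: Pinsker then Girsanov for the KL bound; a triangle-inequality split of the drift difference into a discretization piece and a score-approximation piece; the auxiliary stationary coupling $z^{\circ}$ (started from $\boldsymbol{q}$) to exploit $\mathbb{E}\|v_s^{\circ}\|^2=d$ for the displacement and Lemma~\ref{lemma11} to absorb the $W_2(p,q)$ correction; and the score-vs-vector identity together with $1-t\ge\delta$ to turn Assumption~\ref{ass:score approxiamation assumption} into the $\varepsilon_{sc}^2/\delta^2$ term. The only difference is presentational: the paper writes a single four-way triangle inequality passing through $s(z^{\circ}_{nh_{\mathrm{corr}}})$ and $\nabla U(z^{\circ}_u)$, whereas you do a two-stage split; the resulting terms and their bounds are identical. (Your sign $\nabla\log q_t + s_t$ is a typo — the same minor sign confusion appears in the paper's own Girsanov display — and is not a substantive issue.)
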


\begin{proof}
The first part inequality is by Pinsker’s inequality.

Then we use similar argument in \citet{chen2023probability}. Using similar approximate argument and Girsanov’s theorem,we have$$\operatorname{KL}\left(\boldsymbol{p}P_{\mathrm{ULD}}^{t_0+T_{\mathrm{pred}},N_{\mathrm{corr}}} \middle\| \boldsymbol{p}\hat{P}_{\mathrm{ULMC}}^{t_0+T_{\mathrm{pred}},N_{\mathrm{corr}}}\right) \lesssim \frac{1}{\rho}\sum_{n=1}^{N_{\mathrm{corr}}} \int_{(n-1)h_{\mathrm{corr}}}^{nh_{\mathrm{corr}}} \mathbb{E}\left[\left\|s_{t_0+T_{\mathrm{pred}}}(z_{nh_{\mathrm{corr}}}) - \nabla U_{t_0+T_{\mathrm{pred}}}(z_u)\right\|^2\right] \mathrm{d}u.$$

Combined the relation between score and velocity (\cref{score velocity transform} and \cref{eq:transfer between score and vector}) with the velocity matching error (\cref{ass:score approxiamation assumption}), we have the approximation score error at ant time $t$.

And by \cref{hatvlip}, the approximate $\hat{v}(t,X)$is $L_{v,X}$-Lipschitz, the approximate score is also $L_{S,X}$-Lipschitz.

\begin{align*}
\mathbb{E}_{X_t\sim q_t}[\|\nabla \log q_t(X_t)-s(t,X_t)\|]=\frac{t^2}{(1-t)^4}\varepsilon_{sc}^2\leq \frac{\varepsilon_{sc}^2}{\delta^4}
\end{align*}

For the remaining part, we omit the time index $t_0+T_{\mathrm{pred}}$ without causing any ambiguity.

\begin{align*}
&\mathbb{E} \left[ \left\| s \left( z_{nh_{\mathrm{corr}}} \right) - \nabla U \left( z_u \right) \right\|^2 \right]\\
&\lesssim \mathbb{E} \left[ \left\| s \left( z_{nh_{\mathrm{corr}}} \right) - s \left( z^\circ_{nh_{\mathrm{corr}}} \right) \right\|^2 \right] 
+ \mathbb{E} \left[ \left\| s \left( z^\circ_{nh_{\mathrm{corr}}} \right) - \nabla U \left( z^\circ_{nh_{\mathrm{corr}}} \right) \right\|^2 \right]  \\
&\quad + \mathbb{E} \left[ \left\| \nabla U \left( z^\circ_{nh_{\mathrm{corr}}} \right) - \nabla U \left( z_u^\circ \right) \right\|^2 \right] + \mathbb{E}[\|\nabla U \left(z_u^\circ\right)-\nabla U \left(z_u\right)\|^2]\\
&\lesssim L_{S,X}^2 \, \mathbb{E} \left[ \left\| z_{nh_{\mathrm{corr}}} - z^\circ_{nh_{\mathrm{corr}}} \right\|^2 \right]+\frac{\epsilon_{score}^2}{\delta^2} \\ 
&\quad+ L_{S,X}^2 \mathbb{E} \left[ \left\| z^\circ_{nh_{\mathrm{corr}}} - z_u^\circ \right\|^2 \right] + L_{S,X}^2\mathbb{E}[\|z^\circ_u - z_u\|]\\
&\lesssim L_{S,X}^2 W_2^2 (p, q) 
+ L_{S,X}^2 \, \mathbb{E} \left[ \left\| z^\circ_{nh_{\mathrm{corr}}} - z^\circ_u \right\|^2 \right] 
+ \frac{\epsilon_{score}^2}{\delta^2}\,,
\end{align*}
where the second inequality is by Lemma \ref{ass:score approxiamation assumption} and  the last inequality is by Lemma \ref{lemma11}.
\begin{align*}
\mathbb{E}\left[\left\|z_{n h_{\mathrm{corr}}}^{\circ}-z_u^{\circ}\right\|^2\right]=\mathbb{E}\left[\left\|\int_{n h_{\mathrm{corr}}}^u v_s^{\circ} \mathrm{d} s\right\|^2\right] \leq h_{\mathrm{corr}} \int_{n h_{\mathrm{corr}}}^u \mathbb{E}\left[\left\|v_s^{\circ}\right\|^2\right] \mathrm{d} s \leq d h_{\mathrm{corr}}^2\,.
\end{align*}
The second inequality by the fact that $v_s^{\circ}$is always the stationaty distribution which is $v_s^{\circ} \sim \gamma^d$

Recall $\rho \asymp \sqrt{L_{S,X}}$, combing the  above result, we know that:
\begin{align*}
\operatorname{TV} \left( p\hat{P}_{ULMC}^{t_0+T_{\mathrm{pred}},N_{\mathrm{corr}}}, pP_{ULD}^{t_0+T_{\mathrm{pred}},N_{\mathrm{corr}}} \right) 
&\lesssim \sqrt{\operatorname{KL} \left( pP_{ULMC}^{t_0+T_{\mathrm{pred}},N_{\mathrm{corr}}} \| p\hat{P}_{ULD}^{t_0+T_{\mathrm{pred}},N_{\mathrm{corr}}}  \right)} 
\\&\lesssim L_{S,X}^{3/4}T_{\mathrm{corr}}^{1/2}W_2(p,q)+   L_{S,X}^{3/4}T_{\mathrm{corr}}^{1/2}\sqrt{d} h_{\mathrm{corr}} + \frac{T_{\mathrm{corr}}^{1/2}\varepsilon_{sc}}{L_{S,X}^{1/4}\delta} .
\end{align*}

\end{proof}

\begin{lemma}\label{lem:finalcorrector} Assuming \cref{ass: Manifold assumption}, \ref{ass:score approxiamation assumption}, \ref{hatvlip} holds. Define by $q:=q_{t_0}P_{flow}^{t_0,N_{\mathrm{pred}}}$ and $ p:=q_{t_0}\hat{P}_{flow}^{t_0,N_{\mathrm{pred}}}$. If choosing $T_{\mathrm{corr}}\asymp1/\sqrt{L_{S,X}}$, then we know that 

$$W_2(\boldsymbol{p}P_{ULMC}^{t_0+t_{\mathrm{pred},N_{\mathrm{corr}}}},\boldsymbol{q})\lesssim\sqrt{L_{S,X}}W_2(q_{t_0} \hat{P}_{\mathrm{flow}}^{t_0,N_{\mathrm{pred }}},q_{t_0+T_{\mathrm {pred }}})+   \sqrt{L_{S,X}d} h_{\mathrm{corr}} + \frac{\varepsilon_{sc}}{\sqrt{L_{S,X}}\delta}\,. $$
\end{lemma}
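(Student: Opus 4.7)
The plan is to prove the bound by triangle inequality through the intermediate measure $\boldsymbol{p}P_{\mathrm{ULD}}^{t_0+T_{\mathrm{pred}},N_{\mathrm{corr}}}$—the law obtained from $\boldsymbol{p}$ by running the \emph{continuous, exact-score} underdamped Langevin for time $T_{\mathrm{corr}}$—and then collapse each of the two resulting pieces using the two subsidiary lemmas just proved. Concretely, the bound splits as
\begin{align*}
W_2\bigl(\boldsymbol{p}P_{\mathrm{ULMC}}^{\ldots}, \boldsymbol{q}\bigr)
\leq W_2\bigl(\boldsymbol{p}P_{\mathrm{ULMC}}^{\ldots}, \boldsymbol{p}P_{\mathrm{ULD}}^{\ldots}\bigr)
+ W_2\bigl(\boldsymbol{p}P_{\mathrm{ULD}}^{\ldots}, \boldsymbol{q}\bigr),
\end{align*}
where the first piece is the ULMC-vs.-ULD discretization error (controlled by Lemma \ref{lem:ULMCdiserro}) and the second is the mixing error of exact ULD toward its stationary measure $\boldsymbol{q}$ (controlled by Lemma \ref{lemma10}).

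The arithmetic is then a plug-in with $T_{\mathrm{corr}} \asymp 1/\sqrt{L_{S,X}}$: this yields $L_{S,X}^{1/4} T_{\mathrm{corr}}^{3/2} \asymp L_{S,X}^{-1/2}$, $L_{S,X}^{3/4} T_{\mathrm{corr}}^{1/2} \asymp \sqrt{L_{S,X}}$, and $T_{\mathrm{corr}}^{1/2}/L_{S,X}^{1/4} \asymp 1/\sqrt{L_{S,X}}$. Inserting these, the mixing piece reduces to $\sqrt{L_{S,X}}\,W_2(p,q)$ and the three discretization contributions reduce to $\sqrt{L_{S,X}}\,W_2(p,q)$, $\sqrt{L_{S,X} d}\,h_{\mathrm{corr}}$, and $\varepsilon_{sc}/(\sqrt{L_{S,X}}\,\delta)$. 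Absorbing the two copies of $\sqrt{L_{S,X}}\,W_2(p,q)$ into one constant and using that the exact flow preserves the rectified-flow marginals (so $q = q_{t_0}P_{\mathrm{flow}}^{t_0,N_{\mathrm{pred}}} = q_{t_0+T_{\mathrm{pred}}}$, hence $W_2(p,q) = W_2(q_{t_0}\hat{P}_{\mathrm{flow}}^{t_0,N_{\mathrm{pred}}}, q_{t_0+T_{\mathrm{pred}}})$) gives exactly the stated right-hand side.

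The main technical subtlety—and the only non-mechanical point—is that the two subsidiary lemmas are stated in TV (via Pinsker from a KL bound), while the LHS is in $W_2$, so formally one cannot just quote them. I would handle this by replaying the ingredients of Lemma \ref{lem:ULMCdiserro} and Lemma \ref{lemma10} with a direct synchronous-coupling argument: couple the ULMC chain started from $\boldsymbol{p}$ with the ULD chain started from the optimal $W_2$-coupling of $(\boldsymbol{p},\boldsymbol{q})$, drive both by the same Brownian motion, and control the expected squared gap using the $L_{S,X}$-Lipschitzness of $s_t$ (inherited from \cref{hatvlip}), the score-approximation error (\cref{ass:score approxiamation assumption}) of order $\varepsilon_{sc}/\delta$, and the variance increment $d\,h_{\mathrm{corr}}^2$ per step from the Brownian integral. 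A short Grönwall step across the $N_{\mathrm{corr}}$ corrector steps, together with the $W_2$-contractivity of exact ULD toward $\boldsymbol{q}$ on the timescale $T_{\mathrm{corr}} \asymp 1/\sqrt{L_{S,X}}$, absorbs exactly the same three contributions as in the TV argument and yields the desired $W_2$ bound with the stated scaling.
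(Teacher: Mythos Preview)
Your triangle-inequality decomposition through $\boldsymbol{p}P_{\mathrm{ULD}}$, followed by quoting Lemma~\ref{lemma10} and Lemma~\ref{lem:ULMCdiserro} and substituting $T_{\mathrm{corr}}\asymp1/\sqrt{L_{S,X}}$, is exactly the paper's argument, and your arithmetic matches line for line.

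On the metric discrepancy you flagged: the paper does not actually resolve it. Its proof literally writes $W_2(\ldots)\le\operatorname{TV}(\ldots)+\operatorname{TV}(\ldots)$ in the first line, which is not a valid inequality. However, looking downstream to Lemma~\ref{lem:finallemma}, the result is only ever consumed as a $\operatorname{TV}$ bound (``We use Lemma~\ref{lem:finalcorrector} to bound $\operatorname{TV}(\ldots)$''), so the $W_2$ on the left-hand side of the present lemma is a typo for $\operatorname{TV}$. Read that way, the proof is the purely mechanical plug-in you described in your first two paragraphs, and no further work is needed.

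Your proposed synchronous-coupling repair, by contrast, has a gap of its own. You invoke ``$W_2$-contractivity of exact ULD toward $\boldsymbol{q}$ on the timescale $T_{\mathrm{corr}}\asymp1/\sqrt{L_{S,X}}$,'' but underdamped Langevin is \emph{not} $W_2$-contractive on this short timescale: a synchronous coupling over time $1/\sqrt{L_{S,X}}$ gives at best an $O(1)$ factor, not the $L_{S,X}^{-1/4}T_{\mathrm{corr}}^{-3/2}$ gain you need. The short-time regularization in Lemma~\ref{lemma10} (from \citet{chen2023probability}) is a genuinely KL/TV phenomenon---it converts an initial $W_2$ gap into a KL bound via Girsanov and the smoothing of the Brownian part---and does not have a direct $W_2\to W_2$ analogue at that rate. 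So if the left-hand side really were $W_2$, your sketch would not close; the correct fix is simply to read the statement in $\operatorname{TV}$.
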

\begin{proof}
Combining Lemma \ref{lemma10} and Lemma \ref{lem:ULMCdiserro} and choosing $T_{\mathrm{corr}}\asymp1/\sqrt{L_{S,X}}$, we have the following bound:
\begin{align*}    W_2(\boldsymbol{p}P_{ULMC}^{t_0+t_{\mathrm{pred},N_{\mathrm{corr}}}},\boldsymbol{q})&\leq \operatorname{TV}(\boldsymbol{p}P_{ULMC}^{t_0+t_{\mathrm{pred},N_{\mathrm{corr}}}},\boldsymbol{p}P_{ULD}^{t_0+t_{\mathrm{pred},N_{\mathrm{corr}}}}) + \operatorname{TV}(\boldsymbol{p}P_{ULD}^{t_0+t_{\mathrm{pred},N_{\mathrm{corr}}}},\boldsymbol{q})\\&\lesssim
L_{S,X}^{3/4}T_{\mathrm{corr}}^{1/2}W_2(p,q)+   L_{S,X}^{3/4}T_{\mathrm{corr}}^{1/2}\sqrt{d} h_{\mathrm{corr}} + \frac{T_{\mathrm{corr}}^{1/2}\varepsilon_{sc}}{L_{S,X}^{1/4}\delta}+ \frac{W_2(p,q)}{L_{S,X}^{1/4} T_{\mathrm{corr}}^{3/2}}\\& \lesssim\sqrt{L_{S,X}}W_2(q_{t_0} \hat{P}_{\mathrm{flow}}^{t_0,N_{\text {pred }}},q_{t_0+T_{\text {pred }}})+   \sqrt{L_{S,X}d} h_{\mathrm{corr}} + \frac{\varepsilon_{sc}}{\sqrt{L_{S,X}}\delta}\,.
\end{align*}    
\end{proof}
\subsection{Proof for Multi-step RF-based Models}
Starting from a distribution $p$, if the algorithm runs a predictor stage followed by a correct stage, we have the following lemma.

\begin{lemma}
\label{lem:finallemma}
Assuming \cref{ass: Manifold assumption}, \ref{ass:score approxiamation assumption}, \ref{hatvlip} holds. Let $T_{\mathrm{pred}}=N_{\mathrm{pred}}h_{\mathrm{pred}}\lesssim 1/L_{v,X}$, $T_{\mathrm{corr}}=N_{\mathrm{corr}}h_{\mathrm{corr}}\lesssim1/\sqrt{L_{S,X}}$ and choose prediction step size $h_{\mathrm{pred}}$ and correction step size $h_{\mathrm{corr}}$ uniformly. Denote $q_{t}$ the stationary distribution at time $t$ and p the middle distribution of our algorithm. Then, we have that
\begin{align*}
&\operatorname{TV}(p\hat{P}_{flow}^{t_0,N_{\mathrm{pred}}}\hat{P}_{\mathrm{ULMC}}^{t_0+T_{\mathrm{pred}},N_{\mathrm{corr}}},q_{t_0+T_{\mathrm{pred}}})\\ &\lesssim\operatorname{TV}(p,q_0) + \sqrt{L_{S,X}}\frac{R^2(R \vee \sqrt{d})}{L_{v,X}\delta^4}h_{\text{pred}}+   \sqrt{L_{S,X}d} h_{\mathrm{corr}} + (\frac{1}{\sqrt{L_{S,X}}\delta}+\frac{\sqrt{L_{S,X}}}{L_{v,X}})\varepsilon_{sc}\,.
\end{align*}
\end{lemma}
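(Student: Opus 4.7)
\textbf{Proof proposal for Lemma \ref{lem:finallemma}.}

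The plan is to combine the per-stage predictor bound (Lemma \ref{lem:predict-n-step}) with the corrector bound (Lemma \ref{lem:finalcorrector}) via a triangle inequality, using the data processing inequality to absorb the incoming error $\operatorname{TV}(p,q_{t_0})$. Concretely, I first decompose
\begin{align*}
&\operatorname{TV}\!\bigl(p\hat{P}_{\mathrm{flow}}^{t_0,N_{\mathrm{pred}}}\hat{P}_{\mathrm{ULMC}}^{t_0+T_{\mathrm{pred}},N_{\mathrm{corr}}},\,q_{t_0+T_{\mathrm{pred}}}\bigr)\\
&\quad\leq \operatorname{TV}\!\bigl(p\hat{P}_{\mathrm{flow}}^{t_0,N_{\mathrm{pred}}}\hat{P}_{\mathrm{ULMC}}^{t_0+T_{\mathrm{pred}},N_{\mathrm{corr}}},\,q_{t_0}\hat{P}_{\mathrm{flow}}^{t_0,N_{\mathrm{pred}}}\hat{P}_{\mathrm{ULMC}}^{t_0+T_{\mathrm{pred}},N_{\mathrm{corr}}}\bigr)\\
&\qquad+ \operatorname{TV}\!\bigl(q_{t_0}\hat{P}_{\mathrm{flow}}^{t_0,N_{\mathrm{pred}}}\hat{P}_{\mathrm{ULMC}}^{t_0+T_{\mathrm{pred}},N_{\mathrm{corr}}},\,q_{t_0+T_{\mathrm{pred}}}\bigr).
\end{align*}
By the data processing inequality, since $\hat{P}_{\mathrm{flow}}^{t_0,N_{\mathrm{pred}}}\hat{P}_{\mathrm{ULMC}}^{t_0+T_{\mathrm{pred}},N_{\mathrm{corr}}}$ is a fixed Markov kernel applied to both measures, the first term is at most $\operatorname{TV}(p,q_{t_0})$.

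For the second term I invoke Lemma \ref{lem:finalcorrector} with the role of $p$ played by $q_{t_0}\hat{P}_{\mathrm{flow}}^{t_0,N_{\mathrm{pred}}}$ and the role of $q$ played by $q_{t_0}P_{\mathrm{flow}}^{t_0,N_{\mathrm{pred}}}=q_{t_0+T_{\mathrm{pred}}}$ (the last identity is because the continuous flow driven by $v^{*}$ transports $q_t$ exactly along the interpolation marginals). This yields
\begin{align*}
\operatorname{TV}\!\bigl(q_{t_0}\hat{P}_{\mathrm{flow}}^{t_0,N_{\mathrm{pred}}}\hat{P}_{\mathrm{ULMC}}^{t_0+T_{\mathrm{pred}},N_{\mathrm{corr}}},\,q_{t_0+T_{\mathrm{pred}}}\bigr)\lesssim \sqrt{L_{S,X}}\,W_2\!\bigl(q_{t_0}\hat{P}_{\mathrm{flow}}^{t_0,N_{\mathrm{pred}}},\,q_{t_0+T_{\mathrm{pred}}}\bigr)+\sqrt{L_{S,X}d}\,h_{\mathrm{corr}}+\frac{\varepsilon_{sc}}{\sqrt{L_{S,X}}\,\delta}.
\end{align*}

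Next I control the remaining $W_2$ quantity with Lemma \ref{lem:predict-n-step}. Since the stepsize is uniform and $N_{\mathrm{pred}}h_{\mathrm{pred}}=T_{\mathrm{pred}}\lesssim 1/L_{v,X}$,
\begin{align*}
W_2\!\bigl(q_{t_0}\hat{P}_{\mathrm{flow}}^{t_0,N_{\mathrm{pred}}},\,q_{t_0+T_{\mathrm{pred}}}\bigr)\lesssim N_{\mathrm{pred}}h_{\mathrm{pred}}^2\,\frac{R^2(R\vee\sqrt{d})}{\delta^4}+N_{\mathrm{pred}}h_{\mathrm{pred}}\,\varepsilon_{sc}\lesssim \frac{R^2(R\vee\sqrt{d})}{L_{v,X}\delta^4}\,h_{\mathrm{pred}}+\frac{\varepsilon_{sc}}{L_{v,X}}.
\end{align*}
Plugging this into the corrector bound and collecting terms gives exactly
\begin{align*}
\operatorname{TV}(p,q_{t_0})+\sqrt{L_{S,X}}\,\frac{R^2(R\vee\sqrt{d})}{L_{v,X}\delta^4}\,h_{\mathrm{pred}}+\sqrt{L_{S,X}d}\,h_{\mathrm{corr}}+\Bigl(\tfrac{1}{\sqrt{L_{S,X}}\,\delta}+\tfrac{\sqrt{L_{S,X}}}{L_{v,X}}\Bigr)\varepsilon_{sc},
\end{align*}
as claimed.

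The main obstacle I expect is really a bookkeeping/semantic one: Lemma \ref{lem:finalcorrector} is \emph{stated} with $W_2$ on the left-hand side, but its proof uses the triangle inequality for $\operatorname{TV}$ (via Pinsker on the KL produced by Girsanov in Lemma \ref{lem:ULMCdiserro} together with Lemma \ref{lemma10}), so one must read it as furnishing a $\operatorname{TV}$ bound with exactly the stated right-hand side, which is what we need to couple with the outer triangle inequality. Aside from this reinterpretation, the remaining work is a direct substitution of the uniform-stepsize version of Lemma \ref{lem:predict-n-step} together with the hypothesis $T_{\mathrm{pred}}\lesssim 1/L_{v,X}$ and $T_{\mathrm{corr}}\lesssim 1/\sqrt{L_{S,X}}$, with no further estimates required.
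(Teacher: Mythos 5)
Your proposal is correct and follows essentially the same route as the paper: triangle inequality plus data processing inequality to peel off $\operatorname{TV}(p,q_{t_0})$, then Lemma \ref{lem:finalcorrector} for the corrector stage, and Lemma \ref{lem:predict-n-step} (with uniform stepsize and $T_{\mathrm{pred}}\lesssim 1/L_{v,X}$) for the intermediate $W_2$ term. Your side observation is also accurate: Lemma \ref{lem:finalcorrector} should read $\operatorname{TV}$ rather than $W_2$ on its left-hand side, since its proof proceeds via a TV triangle inequality and Pinsker, and the paper indeed applies it as a TV bound at this point.
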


\begin{proof}
By triangle inequality and data processing inequality, we have that
\begin{align*}
&\operatorname{TV}\left(p \hat{P}_{\mathrm{flow}}^{t_0,N_{\text {pred }}} \hat{P}_{\mathrm{ULMC}}^{t_0+T_{\text{pred}},N_{\text{corr }}}, q_{t_0+T_{\text {pred }}}\right)\\
& \leq \mathrm{TV}\left(p \hat{P}_{\mathrm{flow}}^{t_0,N_{\text {pred }}}\hat{P}_{\mathrm{ULMC}}^{t_0+T_{\text{pred}},N_{\text {corr }}}, q_{t_0} \hat{P}_{\mathrm{flow}}^{t_0,N_{\text {pred }}}\hat{P}_{\mathrm{ULMC}}^{t_0+T_{\text{pred}},N_{\text {corr }}}\right)+\operatorname{TV}\left(q_{t_0} \hat{P}_{\mathrm{flow}}^{t_0,N_{\text {pred }}} \hat{P}_{\mathrm{ULMC}}^{t_0+T_{\text{pred}},N_{\text {corr }}}, q_{t_0+T_{\text {pred }}}\right) \\
& \leq \mathrm{TV}\left(p, q_{t_0}\right)+\operatorname{TV}\left(q_{t_0} \hat{P}_{\mathrm{flow}}^{t_0,N_{\text {pred }}} \hat{P}_{\mathrm{ULMC}}^{t_0+T_{\text{pred}},N_{\text {corr }}}, q_{t_0+T_{\text {pred }}}\right)
\end{align*}

We use Lemma \ref{lem:finalcorrector} to bound the second term and choose $T_{\text{corr}}\asymp1/\sqrt{L_{S,X}}$:
\begin{align*}
    &\operatorname{TV}\left(q_{t_0} \hat{P}_{\mathrm{flow}}^{t_0,N_{\text {pred }}} \hat{P}_{\mathrm{ULMC}}^{t_0+T_{\text{pred}},N_{\text {corr }}}, q_{t_0+T_{\text {pred }}}\right)\\&\lesssim \sqrt{L_{S,X}}W_2(q_{t_0} \hat{P}_{\mathrm{flow}}^{t_0,N_{\text {pred }}},q_{t_0+T_{\text {pred }}})+   \sqrt{L_{S,X}d} h_{\mathrm{corr}} + \frac{\varepsilon_{sc}}{\sqrt{L_{S,X}}\delta}\,.
\end{align*}

Then we use Lemma \ref{lem:predict-n-step} to bound $W_2(q_{t_0} \hat{P}_{\mathrm{flow}}^{t_0,N_{\text {pred }}},q_{t_0+T_{\text {pred }}})$ and choose $T_{\text{pred}}\asymp1/\sqrt{L_{v,X}}$:
\begin{align*}
W_2(q_{t_0} \hat{P}_{\mathrm{flow}}^{t_0,N_{\text {pred }}},q_{t_0+T_{\text {pred }}})\lesssim\frac{h_{\text{pred}}}{L_{v,X}}\frac{R^2(R \vee \sqrt{d})}{\delta^4} + \frac{\varepsilon_{sc}}{L_{v,X}}
\end{align*}

Combined with the above bounds, we have that
\begin{align*}
&\operatorname{TV}\left(p \hat{P}_{\mathrm{flow}}^{t_0,N_{\text {pred }}} \hat{P}_{\mathrm{ULMC}}^{t_0+T_{\text{pred}},N_{\text {corr }}}, q_{t_0+T_{\text {pred }}}\right)\\&\lesssim \operatorname{TV}(p,q_0) + \sqrt{L_{S,X}}\left(\frac{h_{\text{pred}}}{L_{v,X}}\frac{D^2(D \vee \sqrt{d})}{\delta^4} + \frac{\varepsilon_{sc}}{L_{v,X}}\right)+   \sqrt{L_{S,X}d} h_{\mathrm{corr}} + \frac{\varepsilon_{sc}}{\sqrt{L_{S,X}}\delta}
\end{align*}

Rearrange the terms, we have that
\begin{align*}
&\operatorname{TV}\left(p \hat{P}_{\mathrm{flow}}^{t_0,N_{\text {pred }}} \hat{P}_{\mathrm{ULMC}}^{t_0+T_{\text{pred}},N_{\text {corr }}}, q_{t_0+T_{\text {pred }}}\right)\\&\lesssim \operatorname{TV}(p,q_0) + \sqrt{L_{S,X}}\frac{R^2(R \vee \sqrt{d})}{L_{v,X}\delta^4}h_{\text{pred}}+   \sqrt{L_{S,X}d} h_{\mathrm{corr}} + (\frac{1}{\sqrt{L_{S,X}}\delta}+\frac{\sqrt{L_{S,X}}}{L_{v,X}})\varepsilon_{sc}\,.
\end{align*}
\end{proof}

\thmmaintheoremreversePFODE*

\begin{proof}
Iterating Lemma \ref{lem:finallemma} for $N'$ times from the initial distribution $q_0\sim \mathcal{N}(0,\mathrm{I}_d)$, where $N' = \frac{1-\delta}{T_{\mathrm{pred}}}=(1-\delta)L_{v,X}\leq L_{v,X} $.
By induction, we have the following guarantee:
\begin{align*}
&\operatorname{TV}(q_{1-\delta}^{\operatorname{ULMC}},p_{1-\delta})\\ &\lesssim N'\left(\sqrt{L_{S,X}}\frac{R^2(R \vee \sqrt{d})}{L_{v,X}\delta^4}h_{\text{pred}}+   \sqrt{L_{S,X}d} h_{\mathrm{corr}} + (\frac{1}{\sqrt{L_{S,X}}\delta}+\frac{\sqrt{L_{S,X}}}{L_{v,X}})\varepsilon_{sc}\right)\\&\lesssim L_{v,X}\left(\sqrt{L_{S,X}}\frac{R^2(R \vee \sqrt{d})}{L_{v,X}\delta^4}h_{\text{pred}}+   \sqrt{L_{S,X}d} h_{\mathrm{corr}} + (\frac{1}{\sqrt{L_{S,X}}\delta}+\frac{\sqrt{L_{S,X}}}{L_{v,X}})\varepsilon_{sc}\right)\\
&\lesssim \frac{R^3(R \vee \sqrt{d})}{\delta^6}h_{\mathrm{pred}}+\frac{R^3}{\delta^5}h_{\mathrm{corr}} + \frac{\varepsilon_{sc}}{\delta^2}\,.
\end{align*}
Then, we finish the proof.


\end{proof}

\section{The Proof for One-step RF-based Models}

\thminstaflowresults*
\begin{proof}
As shown in \citet{lyu2023convergenceconsistencyv1}, the error term can be decomposed into the following form:
\begin{align*}
W_2\left(f_{\theta, 0} \sharp \mathcal{N}\left(0,  I_d\right), q^*\right)&\leq W_2(f_{\theta, 0} \sharp \mathcal{N}\left(0,  I_d\right),q_{1-\delta})+ W_2(q_{1-\delta},q^*)\,.
\end{align*}
For the second term, by using lemma \ref{lem:orderofdelta}, we know that it is smaller than $(\sqrt{d}+R)\delta$. Hence, we focus on the first discretization term. Let $q_{t_0}\sim\mathcal{N}(0,I_d)$.
Considering the property of $f^{v^*}$, we have the following inequality:
\begin{align*}
&\left( \mathbb{E}_{X_{t_0} \sim q_{t_0}} \left[ \left\| f_\theta(X_{t_0}, t_0) - f^{v^*}(X_{t_0}, t_0) \right\|_2^2 \right] \right)^{1/2} \\
&= \left( \mathbb{E}_{X_{t_0} \sim q_{t_0}} \left[ \left\| \sum_{k=0}^{k-1} \left( f_\theta(X_{t_{k}}, t_{k}) - f_\theta(X_{t_{k+1}}, t_{k+1}) \right) \right\|_2^2 \right] \right)^{1/2} \\
&\leq \sum_{k=0}^{K-1} \left( \mathbb{E}_{X_{t_k} \sim q_{t_k}} \left[ \left\|  f_\theta(X_{t_{k}}, t_{k}) - f_\theta(X_{t_{k+1}}, t_{k+1}) \right\|_2^2 \right] \right)^{1/2} \\
&= \sum_{k=0}^{K-1} \left( \mathbb{E}_{X_{t_k} \sim q_{t_k}} \left[ \left\| f_\theta(X_{t_{k}}, t_{k}) - f_\theta(\hat{X}^\phi_{t_{k+1}}, t_{k+1}) + f_\theta(\hat{X}^\phi_{t_{k+1}}, t_{k+1}) - f_\theta(X_{t_{k+1}}, t_{k+1}) \right\|_2^2 \right] \right)^{1/2} \\
&\leq \sum_{k=0}^{K-1} \left( \mathbb{E}_{X_{t_k} \sim q_{t_k}} \left[ \left\| f_\theta(X_{t_{k}}, t_{k}) - f_\theta(\hat{X}^\phi_{t_{k+1}}, t_{k+1}) \right\|_2^2 \right] \right)^{1/2} \\
&\quad + \sum_{k=0}^{K-1} \left( \mathbb{E}_{X_{t_k} \sim q_{t_k}} \left[ \left\| f_\theta(\hat{X}^\phi_{t_{k+1}}, t_{k+1}) - f_\theta(X_{t_{k+1}}, t_{k+1}) \right\|_2^2 \right] \right)^{1/2} := E_1 + E_2.
\end{align*}

By using \cref{ass:onestepfmatcherror}, we can bound $E_1$ by:
\begin{align*}
&E_1 = \sum_{k=0}^{K-1} \left( \mathbb{E}_{X_{t_k} \sim q_{t_k}} \left[ \left\| f_\theta(X_{t_{k}}, t_{k}) - f_\theta(\hat{X}^\phi_{t_{k+1}}, t_{k+1}) \right\|_2^2 \right]^{1/2} \right) \\
&\leq \varepsilon_{\mathrm{cm}} \sum_{k=0}^{K-1} h_k = \varepsilon_{\mathrm{cm}} (t_K - t_0)\leq\varepsilon_{\mathrm{cm}}
\end{align*}

For the $E_2$ term, we can use \cref{ass:flip} to decompose it:
\begin{align*}
E_2 &= \sum_{k=0}^{K-1} \left( \mathbb{E}_{X_{t_k} \sim p_{t_k}} \left[ \left\| f_\theta(\hat{X}^\phi_{t_{k+1}}, t_{k+1}) - f_\theta(X_{t_{k+1}}, t_{k+1}) \right\|_2^2 \right]^{1/2} \right) \\
&\leq \sum_{k=0}^{K-1} L_f \left( \mathbb{E}_{X_{t_k} \sim p_{t_k}} \left[ \left\| \hat{X}^\phi_{t_{k+1}} - X_{t_{k+1}} \right\|_2^2 \right]^{1/2} \right) \\
\end{align*}

The next step is to bound $\mathbb{E}\left[\| \hat{X}^\phi_{t_{k+1}} - X_{t_{k+1}} \|\right]^2$, which is the one-step predictor error. We can use  \cref{running one step} in lemma \ref{prediction step} to bound this term:
\begin{equation*}
    \mathbb{E}[\|X_{t_{k+1}} - \hat{X}_{t_{k+1}}\|^2] \lesssim  (t_{k+1}-t_{k})^4 \frac{R^4(R^2 \vee d)}{(1-t_k)^8} + (t_{k+1}-t_{k})^2 \varepsilon^2_{sc}\,,
\end{equation*}
which indicates
\begin{align*}
E_2 &= \sum_{k=0}^{K-1} \left( \mathbb{E}_{X_{t_k} \sim q_{t_k}} \left[ \left\| f_\theta(\hat{X}^\phi_{t_{k+1}}, t_{k+1}) - f_\theta(X_{t_{k+1}}, t_{k+1}) \right\|_2^2 \right]^{1/2} \right) \\
&\leq \sum_{k=0}^{K-1} L_f \left( \mathbb{E}_{X_{t_k} \sim q_{t_k}} \left[ \left\| \hat{X}^\phi_{t_{k+1}} - X_{t_{k+1}} \right\|_2^2 \right]^{1/2} \right) \\
&\lesssim L_f R^2(R \vee \sqrt{d}) \sum_{k=0}^{K-1} \frac{(t_{k+1}-t_k)^2}{(1-t_k)^4}+ L_f\varepsilon_{sc} \leq L_f\frac{R^2(R \vee \sqrt{d})}{\delta^2}\sum_{k=0}^{K-1} \frac{(t_{k+1}-t_k)^2}{(1-t_k)^2} + L_f\varepsilon_{sc}\,.
\end{align*}
When considering the EDM stepsize, we know that 
\begin{align*}
    1-t_k=(\delta+k h)^a, h=\frac{T^{\frac{1}{a}}-\delta}{K}\,,
\end{align*}
which indicates that $\frac{h_k}{h} \asymp (1-t_k)^{\frac{a-1}{a}}$. Then, we know that 
\begin{align*}
\sum_{k=1}^K \frac{h_k^2}{(1-t_k)^2} \asymp h \sum_{k=1}^K \frac{h_k}{t_k^{\frac{a+1}{a}}} \asymp h \int_\delta^T \frac{1}{(1-t)^{\frac{a+1}{a}}} \mathrm{~d} t \asymp h \delta^{-\frac{1}{a}} \asymp \frac{(1 / \delta)^{\frac{1}{a}}}{K} .    
\end{align*}

Combining $E_1$ and $E_2$, we have:
\begin{align*}
\left( \mathbb{E}_{X_{t_0} \sim q_{t_0}} \left[ \left\| f_\theta(X_{t_0}, t_0) - f(X_{t_0}, t_0) \right\|_2^2 \right] \right)^{1/2}\lesssim \frac{L_fR^2(R \vee \sqrt{d})}{\delta^{2+1/a}K} + L_f\varepsilon_{sc}+ \varepsilon_{cm}
\end{align*}


\begin{align*}
W_2\left(f_{\theta, 0} \sharp \mathcal{N}\left(0,  I_d\right), q^*\right)&\leq W_2(f_{\theta, 0} \sharp \mathcal{N}\left(0,  I_d\right),q_{1-\delta})+ W_2(q_{1-\delta},q^*)\\
&\lesssim \frac{L_fR^2(R \vee \sqrt{d})}{\delta^{2+1/a}K} + L_f\varepsilon_{sc}+ \varepsilon_{\mathrm{cm}}+(\sqrt{d}+R)\delta\,.
\end{align*}

So the sample complexity is $ K\lesssim \frac{L_fR^2(\sqrt{d}+R)^4(R \vee \sqrt{d})}{\epsilon_{W_2}^{3+1/a}} $.

\end{proof}
\bibliographystyle{plainnat}
\bibliography{ref.bib}






\end{document}